\documentclass{article}

\usepackage[preprint]{neurips_2025}

\usepackage[utf8]{inputenc} 
\usepackage[T1]{fontenc}    
\usepackage{hyperref}       
\usepackage{url}            
\usepackage{booktabs}       
\usepackage{amsfonts}       
\usepackage{nicefrac}       
\usepackage{microtype}      
\usepackage{xcolor}         
\usepackage{float}          
\usepackage{longtable}      
\usepackage{graphicx}       

\usepackage{amsmath}
\usepackage{amssymb}
\usepackage{mathtools}
\usepackage{amsthm}

\usepackage[capitalize,noabbrev]{cleveref}

\theoremstyle{plain}
\newtheorem{theorem}{Theorem}[section]
\newtheorem{proposition}[theorem]{Proposition}

\theoremstyle{definition}

\theoremstyle{remark}

\usepackage[disable,textsize=tiny]{todonotes}

\usepackage{graphicx}
\usepackage{apptools}
\usepackage[flushleft]{threeparttable}
\usepackage{array,booktabs,makecell}
\usepackage{multirow}
\usepackage{nicefrac}
\usepackage{amsthm}
\usepackage{amsmath,amsfonts,bm}
\usepackage{wrapfig}
\usepackage{caption}
\usepackage{subcaption}
\usepackage{siunitx}
\usepackage{thm-restate}
\usepackage{nccmath}
\usepackage{empheq}
\usepackage{bbm}
\usepackage{tabularx}
\usepackage{placeins}
\usepackage{float}
\usepackage{algorithmic}
\usepackage{algorithm}
\usepackage{tabularx}

\usepackage{color}
\usepackage{colortbl}
\definecolor{bgcolor}{rgb}{0.76,0.88,0.50}
\definecolor{bgcolor0}{rgb}{0.93,0.99,1}
\definecolor{bgcolor1}{rgb}{0.8,1,1}
\definecolor{bgcolor2}{rgb}{0.8,1,0.8}
\definecolor{bgcolor3}{rgb}{0.50,0.90,0.50}
\usepackage{tcolorbox}
\usepackage{pifont}
\definecolor{mydarkgreen}{rgb}{0.15,0.5,0.26}
\definecolor{mydarkred}{rgb}{0.75,0.1,0.1}
\definecolor{highlight}{rgb}{0.9, 0.9, 0.9} 

\newcommand{\R}{\mathbb{R}} 

\theoremstyle{plain}

\makeatletter
\newcommand{\vast}{\bBigg@{4}}

\newcommand{\del}[1]{}

\theoremstyle{plain}

\theoremstyle{definition}

\usepackage{xspace}

\newcommand{\algname}[1]{{\color{gray}\small\sf#1}\xspace}

\title{Super-Tuning: From Activation-Aware Pruning to Sparse Fine-Tuning}

\author{%
  \begin{tabular}{c@{\hspace{1.4em}}c@{\hspace{1.4em}}c}
  Ivan Ilin &
  Philip Zmushko\footnotemark[2] &
  Peter Richt\'arik \\
  KAUST &
  KAUST &
  KAUST \\
  \texttt{\footnotesize ivan.ilin@kaust.edu.sa} &
  \texttt{\footnotesize zmushko.ph.a@gmail.com} &
  \texttt{\footnotesize peter.richtarik@kaust.edu.sa}
  \end{tabular}
}

\begin{document}

\maketitle
\begingroup
\renewcommand{\thefootnote}{\fnsymbol{footnote}}
\footnotetext[2]{Work done while Philip Zmushko was an intern at KAUST; he is now affiliated with ISTA and was affiliated with Yandex Research before coming to KAUST.}
\endgroup
\footnotetext[1]{Code repository: \url{https://github.com/vectozavr/SuperTuning}.}

\begin{abstract}
Large language models (LLMs) remain expensive to fine-tune because full-parameter updates require substantial memory, compute, and per-task storage. We study whether saliency signals originally developed for pruning can be reused to choose where a model should adapt. We propose \algname{Super}, a sparse parameter-efficient fine-tuning (PEFT) method that fixes a small trainable support using a \algname{Wanda}-style activation-weighted magnitude score~\citep{sun2023simple} computed from a calibration pass. We then introduce \algname{Supra}, a hybrid adapter that combines this sparse update with \algname{LoRA} while preserving a matched trainable-parameter budget through a simple budget-splitting rule. In single-seed Math17K arithmetic experiments on \texttt{Llama-3.2-1B} and \texttt{Meta-Llama-3-8B}, the best \algname{Super}/\algname{Supra} variants achieve the highest average accuracy among the tested schedule-selected adapter configurations. We also include a PaFi-style magnitude-only support as a closest training-free sparse baseline and find that low-score supports under both magnitude and \algname{Wanda}-style orderings can be effective. These results suggest that simple pruning-inspired orderings can provide useful fixed sparse supports for PEFT, especially when combined with low-rank adapters.
\end{abstract}

\section{Introduction}
\label{sec:intro}

Large Language Models (LLMs) have become central to modern natural language processing, achieving strong results in a wide array of tasks including question answering, summarization, code generation, and reasoning. However, their impressive capabilities come with a substantial cost: full fine-tuning of such models requires considerable computational resources, high memory consumption, and significant storage for each downstream task. This inefficiency becomes particularly problematic when deploying models in real-world settings with resource constraints or personalization requirements.

To address these limitations, \emph{parameter-efficient fine-tuning} (PEFT) methods have been proposed. These approaches update only a small fraction of the model's parameters, leaving the majority of weights frozen. Among the most successful PEFT techniques is \algname{LoRA} \citep{hu2021lora}, which injects low-rank trainable matrices into existing layers. \algname{LoRA} enables tuning with minimal parameter overhead and has been widely adopted for its balance of efficiency and performance.

Beyond low-rank adaptation, recent work has explored the use of \emph{sparse adaptation}, where only a small, selected subset of existing weights are updated. Methods such as \algname{SIFT} \citep{song2023sparse} and \algname{RoSA} \citep{nikdan2024rosa} select salient weights for fine-tuning based on importance scores or gradient signals. \algname{PaFi} \mbox{\citep{liao-etal-2023-parameter}} instead shows that a task-agnostic sparse mask can be generated without data or gradients by using only pretrained weight magnitudes. \algname{NeFT} \citep{xu2024let} takes this further by identifying and training only the most critical neurons. \algname{SpIEL} \citep{ansell2024scaling} introduces scalable sparse tuning with structured expert layers, and \algname{SAT} \citep{ma2024sparsity} proposes sparsity-accelerated training with carefully selected updates. \algname{$\text{S}^{2}\text{FT}$} \citep{yang2024s} combines sparsity with structured decomposition for efficient and generalizable tuning, while \algname{GIFT-SW} \citep{zhelnin2024gift} uses noise-injected fine-tuning on salient weights to improve robustness. Recent salience-aware sparse PEFT work further studies static and dynamic masks and compares several salience metrics for sparse fine-tuning \citep{liu2025refining}.

While sparse tuning methods are promising, they often require multiple training phases, gradient computations, or architectural modifications to identify important weights. A natural way to improve parameter-efficient tuning is to combine sparse and low-rank updates. This direction is also explored by \algname{SLTrain} \citep{han2024sltrain} in a pretraining setting and, more directly for PEFT, by \algname{RoSA} \citep{nikdan2024rosa}, which integrates sparse and low-rank adapters. Low-rank variants such as \algname{AdaLoRA} \citep{zhang2023adalora} and \algname{DoRA} \citep{liu2024dora} also show that the basic \algname{LoRA} parameterization can be improved by changing rank allocation or weight decomposition. We study whether a pruning-derived, activation-aware, training-free salience score can provide a useful fixed sparse support for PEFT, and whether this support can complement \algname{LoRA} under a transparent parameter-count budget.

In this paper, we introduce \algname{Super}, a sparse PEFT method based on \algname{Wanda}-ordered weights. Inspired by pruning techniques such as \algname{Wanda} \citep{sun2023simple}, \algname{Super} identifies and fine-tunes a small set of weights under an activation-weighted magnitude metric. Unlike sparse PEFT approaches that require additional gradient computations or training stages for weight selection, \algname{Super} relies on a simple, interpretable, training-free metric computed from calibration activations. We study both TopK and BottomK regimes, selecting respectively the largest and smallest \algname{Wanda}-style scores. In our schedule-selected arithmetic-reasoning comparisons, low-score supports often give the strongest sparse and sparse--low-rank variants, although the best direction can depend on model size, training schedule, and whether the sparse support is used alone or combined with \algname{LoRA}. We also test a PaFi-style magnitude-only selection rule, replacing the \algname{Wanda} score by \(|W_{ij}|\), to separate the effect of activation weighting from the simpler pretrained-magnitude prior and to provide a closest training-free sparse baseline.

We further propose \algname{Supra}, a hybrid method that combines \algname{Super} with \algname{LoRA} to jointly leverage \algname{Wanda}-selected sparse updates and low-rank adaptation. This design belongs to the same broad family as hybrid sparse--low-rank methods such as \algname{RoSA}~\citep{nikdan2024rosa}; the main distinction is that our sparse support is selected once using a training-free score, and we explicitly compare \algname{Wanda}-style and magnitude-only BottomK supports under a matched scalar-parameter budget. Our experiments fine-tune two Llama-family models on Math17K and evaluate arithmetic reasoning on AddSub, MultiArith, SingleEq, GSM8K, AQuA, and SVAMP under matched trainable-parameter budgets.

We summarize our key contributions as follows:
\begin{itemize}
    \item We propose \algname{Super}, a \algname{Wanda}-ordered sparse fine-tuning method that updates only a small set of weights selected without training or gradient statistics.
    \item We introduce \algname{Supra}, a hybrid PEFT strategy that combines \algname{Wanda}-selected sparse updates with \algname{LoRA} under an explicit parameter-count budget.
    \item We use a simple budget-matched rank conversion rule that maps a fixed parameter-count budget into a layerwise \algname{LoRA} rank; unlike adaptive rank-allocation methods, this rule is a transparent budget conversion rather than a learned or importance-weighted allocation.
    \item We evaluate \algname{LoRA}, \algname{SIFT}, \algname{RoSA}, random sparse supports, PaFi-style magnitude supports, \algname{Super}, and \algname{Supra} on Math17K arithmetic fine-tuning under matched trainable-parameter budgets. In the reported single-seed schedule-selected comparisons, the best sparse and sparse--low-rank variants achieve the highest average accuracy among the tested adapter configurations.
\end{itemize}

The implementation is available at \url{https://github.com/vectozavr/SuperTuning}.

\Cref{tab:main_text_llama1b_compact,tab:main_text_llama8b_compact} summarize the primary matched-budget arithmetic comparisons, and \Cref{tab:main_text_method_summary} collects the corresponding sparse-mask rules, average accuracies, and average perplexities. \algname{Supra-Mag} denotes the same sparse--low-rank budget split as \algname{Supra}, but with the sparse support selected by bottom \(|W_{ij}|\) rather than by the \algname{Wanda} score. The complete per-benchmark accuracy and perplexity tables for the one- and three-epoch Math17K runs are reported in \Cref{sec:llama1b_math17k_1epoch_appendix,sec:llama1b_math17k_3epoch_appendix,sec:llama8b_math17k_1epoch_appendix,sec:llama8b_math17k_appendix}.

The compact tables below are schedule-selected summaries of the full one- and three-epoch grids reported in the appendix. They are intended to highlight the best observed configuration for each method family under the matched-budget protocol. The same reading convention applies to both  tables: for each method, we report the one- or three-epoch run with the higher average exact-answer accuracy, and the \textbf{Epochs} column gives the selected schedule. Bold marks the best observed matched-budget adapter average, shaded rows are proposed methods, all entries are exact-answer accuracy (\%) across AddSub, MultiArith, SingleEq, GSM8K, AQuA, and SVAMP.

\subsection{Notation} All key notation used in this paper is summarized in a tabular form in \Cref{sec:notation}; see Table~\ref{tab:notation_table}.

\begin{table}[!t]
\centering
\scriptsize
\caption{Schedule-selected Math17K arithmetic comparison for \texttt{Llama-3.2-1B} under rank-equivalent budget \(r_0=8\).}
\label{tab:main_text_llama1b_compact}
\resizebox{\textwidth}{!}{%
\begin{tabular}{lccccccccc}
\toprule
\textbf{Method} & \textbf{Epochs} & \textbf{Selected LR} & \textbf{AddSub} & \textbf{MultiArith} & \textbf{SingleEq} & \textbf{GSM8K} & \textbf{AQuA} & \textbf{SVAMP} & \textbf{Average} \\
\midrule
\textsc{Base} (frozen) & -- & -- & 13.67 & 4.67 & 21.46 & 2.81 & 21.65 & 11.60 & 12.64 \\
\midrule
Full fine-tuning & 3 & $5\cdot10^{-5}$ & 80.00 & 91.67 & 85.63 & 44.05 & 28.74 & 64.10 & 65.70 \\
\midrule
\algname{LoRA} & 3 & $5\cdot10^{-4}$ & 70.63 & 90.83 & 85.24 & 35.33 & 26.77 & 57.60 & 61.07 \\
\algname{RoSA} & 1 & $5\cdot10^{-4}$ & 71.65 & 89.17 & 78.35 & 27.98 & 24.80 & 53.70 & 57.61 \\
\algname{SIFT} (TopK) & 1 & $10^{-4}$ & 60.51 & 83.17 & 63.19 & 23.43 & 3.94 & 40.30 & 45.75 \\
\algname{SIFT} (RandK) & 1 & $10^{-3}$ & 62.28 & 90.83 & 73.62 & 31.46 & 17.72 & 48.70 & 54.10 \\
\algname{Magnitude} (BottomK) & 1 & $10^{-3}$ & 73.92 & 90.50 & 81.69 & 30.25 & 24.80 & 54.70 & 59.31 \\
\midrule
\rowcolor{gray!12}
\algname{Super} (BottomK) & 3 & $5\cdot10^{-4}$ & 69.11 & 85.50 & 70.47 & 34.50 & 25.59 & 47.60 & 55.46 \\
\rowcolor{gray!12}
\algname{Supra-Mag} (BottomK, $\lambda=0.3$) & 1 & $5\cdot10^{-4}$ & 76.71 & 91.67 & 81.10 & 22.97 & 27.56 & 55.10 & 59.18 \\
\rowcolor{gray!12}
\algname{Supra} (BottomK, $\lambda=0.8$) & 3 & $5\cdot10^{-4}$ & 82.53 & 85.50 & 84.45 & 36.16 & 24.41 & 60.30 & \textbf{62.23} \\
\bottomrule
\end{tabular}
}
\end{table}

\begin{table}[!t]
\centering
\scriptsize
\caption{Schedule-selected Math17K arithmetic comparison for \texttt{Meta-Llama-3-8B} under rank-equivalent budget \(r_0=8\).}
\label{tab:main_text_llama8b_compact}
\resizebox{\textwidth}{!}{%
\begin{tabular}{lccccccccc}
\toprule
\textbf{Method} & \textbf{Epochs} & \textbf{Selected LR} & \textbf{AddSub} & \textbf{MultiArith} & \textbf{SingleEq} & \textbf{GSM8K} & \textbf{AQuA} & \textbf{SVAMP} & \textbf{Average} \\
\midrule
\textsc{Base} (frozen) & -- & -- & 22.53 & 21.00 & 36.42 & 10.24 & 24.02 & 24.10 & 23.05 \\
\midrule
Full fine-tuning & 3 & $5\cdot10^{-5}$ & 90.63 & 96.17 & 93.70 & 54.36 & 38.19 & 76.20 & 74.87 \\
\midrule
\algname{LoRA} & 1 & $5\cdot10^{-4}$ & 86.33 & 97.67 & 93.70 & 66.34 & 24.41 & 70.60 & 73.17 \\
\algname{RoSA} & 1 & $10^{-4}$ & 86.84 & 97.50 & 92.72 & 58.45 & 35.83 & 79.60 & 75.16 \\
\algname{SIFT} (TopK) & 1 & $10^{-4}$ & 86.58 & 98.50 & 94.69 & 63.76 & 35.04 & 79.10 & 76.28 \\
\algname{SIFT} (RandK) & 1 & $10^{-3}$ & 84.30 & 98.33 & 93.50 & 63.99 & 45.28 & 81.60 & 77.83 \\
\algname{Magnitude} (BottomK) & 1 & $10^{-3}$ & 90.89 & 97.33 & 95.08 & 68.01 & 41.73 & 81.10 & 79.02 \\
\midrule
\rowcolor{gray!12}
\algname{Super} (BottomK) & 3 & $10^{-4}$ & 72.15 & 95.67 & 88.58 & 66.34 & 42.91 & 70.70 & 72.73 \\
\rowcolor{gray!12}
\algname{Supra} (BottomK, $\lambda=0.3$) & 1 & $5\cdot10^{-4}$ & 89.11 & 95.33 & 96.06 & 67.63 & 41.34 & 82.50 & 78.66 \\
\rowcolor{gray!12}
\algname{Supra-Mag} (BottomK, $\lambda=0.3$) & 1 & $5\cdot10^{-4}$ & 92.91 & 98.50 & 96.85 & 69.83 & 35.43 & 81.20 & \textbf{79.12} \\
\bottomrule
\end{tabular}
}
\end{table}

\begin{table}[!t]
\centering
\scriptsize
\caption{Summary of the schedule-selected main-text rows. Fine-tuning dataset: Math17K; rank-equivalent budget: \(r_0=8\). The score column gives the per-weight scalar used to rank candidate sparse coordinates; \(X_{j:}\) denotes the calibration activation vector feeding column \(j\) of a layer. Average accuracy is exact-answer accuracy (\%) across AddSub, MultiArith, SingleEq, GSM8K, AQuA, and SVAMP; average PPL is test perplexity over the same benchmark set, where lower is better.}
\label{tab:main_text_method_summary}
\resizebox{\textwidth}{!}{%
\begin{tabular}{llcccc}
\toprule
\textbf{Method} & \textbf{Per-weight score} & \multicolumn{2}{c}{\textbf{\texttt{Llama-3.2-1B}}} & \multicolumn{2}{c}{\textbf{\texttt{Meta-Llama-3-8B}}} \\
\cmidrule(lr){3-4}\cmidrule(lr){5-6}
 & & \textbf{Acc.} & \textbf{PPL} & \textbf{Acc.} & \textbf{PPL} \\
\midrule
\textsc{Base} (frozen) & -- & 12.64 & 2.73 & 23.05 & 2.22 \\
\midrule
Full fine-tuning & all parameters & 65.70 & 1.11 & 74.87 & 1.07 \\
\midrule
\algname{LoRA} & -- & 61.07 & 1.17 & 73.17 & 1.14 \\
\algname{RoSA} & $\left|\nicefrac{\partial\mathcal L}{\partial W_{ij}}\right|$ & 57.61 & 1.27 & 75.16 & 1.18 \\
\algname{SIFT} (TopK) & $\left|\nicefrac{\partial\mathcal L}{\partial W_{ij}}\right|$ & 45.75 & 1.33 & 76.28 & 1.18 \\
\algname{SIFT} (RandK) & $\mathrm{Unif}(0,1)$ & 54.10 & 1.25 & 77.83 & 1.13 \\
\algname{Magnitude} (BottomK) & $\left|W_{ij}\right|$ & 59.31 & 1.24 & 79.02 & 1.13 \\
\midrule
\rowcolor{gray!12}
\algname{Super} (BottomK) & $\left|W_{ij}\right|\,\left\|X_{j:}\right\|_2$ & 55.46 & 1.21 & 72.73 & 1.31 \\
\rowcolor{gray!12}
\algname{Supra} (BottomK, $\lambda=0.3$) & $\left|W_{ij}\right|\,\left\|X_{j:}\right\|_2$ & 55.03 & 1.26 & 78.66 & 1.14 \\
\rowcolor{gray!12}
\algname{Supra} (BottomK, $\lambda=0.8$) & $\left|W_{ij}\right|\,\left\|X_{j:}\right\|_2$ & \textbf{62.23} & 1.16 & 78.28 & 1.14 \\
\rowcolor{gray!12}
\algname{Supra-Mag} (BottomK, $\lambda=0.3$) & $\left|W_{ij}\right|$ & 59.18 & 1.25 & \textbf{79.12} & 1.14 \\
\bottomrule
\end{tabular}
}
\end{table}

\section{Related Work}
\label{sec:literature}

\subsection{Parameter–Efficient Fine‑Tuning (PEFT)}

The prohibitive compute and memory cost of updating all parameters of a large language model (LLM) has motivated a rich literature on \emph{parameter–efficient fine‑tuning} (PEFT). Classical adapter-style methods insert small \emph{dense} modules into each Transformer block (e.g., adapters, prefix-tuning, and \algname{LoRA}), greatly reducing trainable parameters, optimizer state, and per-task storage. Complementary work studies \emph{sparse} or otherwise \emph{structured} updates that explicitly select a subset of parameters to train, concentrating adaptation in a smaller set of task-specific degrees of freedom.

\subsection{Sparse fine‑tuning}

Several papers report that high accuracy can be recovered even on reasoning‑heavy tasks -- when a model is updated on only a small fraction of its weights.  
\citet{song2023sparse} analyze the PAC‑Bayesian generalization bound of PEFT and propose \algname{SIFT}, a gradient‑based algorithm that activates at most $k$ parameters per layer during training.  
\citet{ansell2024scaling} show that dynamically grown–pruned sparse deltas scale to 13‑B‑parameter LLaMA‑2 while retaining memory proportional to the sparsity pattern rather than to model size.  
Orthogonally, \citet{ma2024sparsity} exploit the observation that only a minority of neurons fire on any given example and skip the forward/backward pass of inactive neurons to accelerate both continual pre‑training and supervised fine‑tuning by up to $45\%$.  
Building on structured sparsity, \citet{yang2024s} introduce \algname{S$^{2}$FT}, which selects a small set of attention heads and MLP channels, then co‑permutes weight matrices so the selected components form dense sub‑matrices that can be trained efficiently with ordinary GEMM kernels.  

For unstructured weight-level sparse masks, different methods mainly differ in the scalar score used to rank candidate coordinates. Given a fine-tuning loss \(\mathcal L\), we write the gradient-magnitude score for an entry \(W_{ij}\) as
\[
    G_{ij}=\left|\frac{\partial \mathcal L}{\partial W_{ij}}\right|.
\]
This is the score notation used in our method-summary table for gradient-based sparse selection methods such as \algname{SIFT} and the sparse component of \algname{RoSA}. In contrast, \algname{PaFi} uses a data-free pretrained-magnitude score,
\[
    B_{ij}=|W_{ij}|,
\]
selecting sparse fine-tuning coordinates from the magnitude ordering of the original weights \citep{liao-etal-2023-parameter}. Our \algname{Magnitude} and \algname{Supra-Mag} rows use this PaFi-style score, while \algname{Super} and the default \algname{Supra} rows use the activation-weighted \algname{Wanda} score introduced in \Cref{sec:adaptation}.

\algname{PaFi} is especially relevant because it shows that task-agnostic sparse masks can be built from pretrained weight magnitudes alone. We therefore treat magnitude-only BottomK selection as a PaFi-style sparse baseline. Recent work such as \algname{GaLLoP} also highlights the usefulness of low-magnitude parameters for sparse fine-tuning, combining small pretrained magnitudes with large downstream gradient magnitudes~\citep{choudhary2025gallop}. In contrast, our default \algname{Super} masks are selected without downstream gradients, using a training-free \algname{Wanda}-style activation-weighted score computed from a calibration pass. We also include a magnitude-only support as a PaFi-style baseline and study its combination with \algname{LoRA}.

\subsection{Hybrid low‑rank \& sparse methods}

A complementary line of work seeks to combine the representational benefits of low‑rank adaptation with the compactness of sparsity. \algname{RoSA} is closely related because it combines sparse and low-rank adaptation for PEFT: \citet{nikdan2024rosa} decompose the update into a low‑rank adapter plus a very sparse residual and optimize both jointly. Our novelty is not that sparse and low-rank updates can be combined, but that the sparse support is fixed by a simple training-free ordering and that we systematically compare \algname{Wanda}-style activation-weighted supports, magnitude-only supports, TopK/BottomK directions, and their sparse--low-rank hybrids under matched scalar-parameter budgets. For the pre‑training regime, \citet{han2024sltrain} factorize each linear layer into a low‑rank term and a \emph{fixed‑support} sparse term, achieving up to $73\%$ memory savings while matching full‑rank performance.

\subsection{Fine‑tuning at the neuron or weight level}

Moving to finer granularity, \citet{xu2024let} supervise the updates at the level of individual neurons (\algname{NeFT}), explicitly identifying task‑relevant neurons whose small subset updates can achieve higher accuracy than full‑parameter fine‑tuning.  \citet{zhelnin2024gift} extend the idea of salient‑weight selection with \algname{GIFT‑SW}, which injects Gaussian noise into \emph{non‑salient} columns while learning only the salient ones, closing the gap to full fine‑tuning under the same compute budget.

\subsection{Connections to pruning and outlier‑aware updates}

Pruning and sparse fine-tuning can be viewed as complementary mask-selection problems. Pruning asks which parameters can be removed or frozen while preserving the pretrained function; sparse fine-tuning asks where a limited adaptation budget should be allocated. This connection has appeared in several forms: \algname{FISH Mask} uses Fisher information to precompute fixed sparse update masks \citep{sung2021training}, \algname{PaFi} selects low-magnitude pretrained parameters for sparse fine-tuning and explicitly frames the procedure as pruning-and-fine-tuning \citep{liao-etal-2023-parameter}, row-based sparse fine-tuning uses neural-pruning feature-importance metrics to select structured update supports \citep{li2025efficient}, and \algname{RoCoFT} studies row/column updates with \algname{Wanda}-style row and column scoring among its selection strategies \citep{kowsher-etal-2025-rocoft}. In this work, we turn this pruning-to-fine-tuning connection into a simple weight-level PEFT method: a calibration pass computes activation-aware \algname{Wanda}-style scores, and the resulting ordering defines a fixed sparse set of original model weights to train under a matched parameter budget. We additionally include PaFi-style magnitude masks, which remove the activation term and rank coordinates only by \(|W_{ij}|\), as a direct comparison to the activation-aware \algname{Wanda} ordering.

Our method is inspired by pruning research that measures weight saliency $S_{kq}$ for every weight $W_{kq}$ from a linear layer $W \in \R^{c \times b}$ before deleting parameters. Using \(\|\cdot\|_2\) for the Euclidean norm, the \algname{Wanda} metric multiplies a weight’s magnitude by the norm of its input activation:
\begin{equation}
    \label{eq:wanda_metric}
    S_{kq} = \left(|W_{kq}|\|X_{q:}\|_2\right)^2,
\end{equation}
where $X_{q:} \in \R^{1 \times a}$ is the $q^{\text{th}}$ row of the input matrix $X \in \R^{b \times a}$, and \(a\) is the number of calibration token positions.
The squared form in \Cref{eq:wanda_metric} has an exact single-weight reconstruction interpretation: as shown later in \Cref{prop:wanda_single_weight}, zeroing \(W_{kq}\) changes the layer output by squared Frobenius norm \(W_{kq}^2\|X_{q:}\|_2^2\).

The metric $S_{kq}$ is used to approximate output-level importance and enables one-shot pruning without weight updates \citep{sun2023simple}. In pruning, low-\algname{Wanda}-score weights are natural candidates for removal because they have small estimated contribution to the layer-output reconstruction proxy. In sparse fine-tuning, we instead keep the model intact and ask whether this same activation-aware ordering can identify a small set of weights that can be safely and effectively trained. We therefore study both TopK and BottomK masks, selecting respectively the largest and smallest \algname{Wanda}-style scores under the same sparse budget. A natural intermediate possibility is to split the sparse budget between both ends of the score ordering; we evaluate this mixed TopK--BottomK support in \Cref{sec:hybrid_super_appendix}.

Our experiments suggest that BottomK entries often form useful sparse fine-tuning supports in this setting. This strengthens the connection to pruning: the weights that appear least critical for preserving the pretrained layer output can also provide useful degrees of freedom for task adaptation. One possible explanation is that high-score weights help preserve the pretrained computation, so perturbing them can be risky, while low-score weights can be changed more freely. Another is optimization-related: because BottomK entries are less sensitive under the reconstruction proxy, they may tolerate larger learning rates and adapt faster under a limited training budget. Thus, the \algname{Wanda}-style score should be viewed as an activation-aware ordering signal for sparse fine-tuning, not only as a pruning metric.

\section{Adaptation of Large Language Models}
\label{sec:adaptation}

\subsection{Notation}
Let \( \mathcal{N} \) denote a pre-trained Large Language Model (LLM), and let \( \mathcal{W} = \{W^{(1)}, W^{(2)}, \ldots, W^{(m)}\} \) be the collection of fully connected weight matrices in \( \mathcal{N} \) considered for adaptation, including those within attention sublayers, with each \( W^{(i)} \in \R^{c_i \times b_i} \) for \( 1 \leq i \leq m \). We use \(\mathcal A\subseteq\{1,\ldots,m\}\) for the index set of matrices actually adapted by a given method; when all candidate matrices are adapted, \(\mathcal A=\{1,\ldots,m\}\). Let the vector \( \bar{w} \in \R^{\bar{d}} \) represent the remaining parameters of \( \mathcal{N} \), such as biases and normalization parameters, concatenated into a single vector. Given a dataset \( \mathcal{D} \) and a loss function \( \mathcal{L}(\mathcal{D}; \mathcal{W}, \bar{w}) \), the full fine-tuning (FFT) of \( \mathcal{N} \) can be formulated as the following optimization problem:
\begin{equation}
    \min_{\mathcal{W}, \bar{w}} \mathcal{L}(\mathcal{D}; \mathcal{W}, \bar{w}).
    \label{eq:fft}
\end{equation}
Due to the large scale of modern LLMs -- often comprising billions of parameters -- performing FFT is computationally expensive and memory-intensive, making it impractical on standard GPUs. A practical alternative is to apply lightweight modifications known as \emph{adapters}, which we formalize next.

Let \( \Delta = \{\Delta^{(1)}, \Delta^{(2)}, \ldots, \Delta^{(m)}\} \) denote perturbations applied to the fully connected weights, with \( \Delta^{(i)} \in \R^{c_i \times b_i} \) for all \( i \). Define the adapted weights as \( \mathcal{W} + \Delta = \{W^{(1)} + \Delta^{(1)}, W^{(2)} + \Delta^{(2)}, \ldots, W^{(m)} + \Delta^{(m)}\} \), and let \( \bar{\delta} \in \R^{\bar{d}} \) represent perturbations to \( \bar{w} \). The adapted model is then obtained by solving:
\begin{equation}
    \min_{\Delta, \bar{\delta}} \mathcal{L}(\mathcal{D}; \mathcal{W} + \Delta, \bar{w} + \bar{\delta}) \quad \text{s.t.} \quad \mathcal{C}(\Delta, \bar{\delta}),
    \label{eq:adapter}
\end{equation}
where \( \mathcal{C}(\Delta, \bar{\delta}) \) encodes constraints on the perturbations (e.g., low-rank or sparse structure) to reduce the number of optimized and stored task-specific parameters. Such constraints may also reduce compute when paired with suitable structure or kernels, but unstructured sparse or low-rank adapters still require forward and backward propagation through the frozen pretrained model. Notably, if no constraints are imposed, this setting reduces to standard FFT.

In this work, we focus on the common scenario where \( \bar{\delta} = 0 \), though in principle \( \bar{w} \) can also be fine-tuned, especially given its relatively small size compared to \( \mathcal{W} \). Our reported experiments adapt the full set of standard Llama linear projections listed in \Cref{sec:experiments}, while the notation also allows partial adaptation through the index set \(\mathcal A\).

\paragraph{LoRA: Low-Rank Adaptation.} Low-rank adaptation (\algname{LoRA})~\citep{hu2021lora} constrains each perturbation \( \Delta^{(i)} \) to be low-rank, specifically:
\begin{equation}
    \min_{\Delta} \mathcal{L}(\mathcal{D}; \mathcal{W} + \Delta, \bar{w}) \quad \text{s.t.} \quad \text{rank}(\Delta^{(i)}) \leq r \quad \forall\, i.
    \label{eq:lora}
\end{equation}
In the usual implementation, the update is parameterized as
\begin{equation}
    \Delta^{(i)} = \gamma_i L^{(i)} R^{(i)},
    \qquad
    L^{(i)} \in \R^{c_i \times r_i}, \quad
    R^{(i)} \in \R^{r_i \times b_i},
    \label{eq:lora_parameterization}
\end{equation}
where \(\gamma_i\) is the \algname{LoRA} scaling factor. This uses \(r_i(c_i+b_i)\) trainable scalar parameters for layer \(i\). We use this implementation parameter count for budget matching because these are the scalar adapter weights optimized and stored by standard \algname{LoRA}.

\paragraph{SpA: Sparse Adaptation.} Using \(\|\cdot\|_0\) to denote the number of nonzero entries, sparse adaptation (SpA)~\cite{sung2021training} enforces sparsity in the perturbations:
\begin{equation}
    \min_{\Delta} \mathcal{L}(\mathcal{D}; \mathcal{W} + \Delta, \bar{w}) \quad \text{s.t.} \quad \|\Delta^{(i)}\|_0 \leq p c_i b_i \quad \forall\, i,
    \label{eq:spa}
\end{equation}
where \( p \in (0, 1] \) is the sparsity density. It is common to fix the sparse support during training, thereby reducing the number of trainable parameters by a factor of \( p \).

\subsection{Super: Mask Generation Algorithm for SpA.}

Let \(\mathcal{D}_{\mathrm{cal}}\) denote a tokenized calibration set. For a layer with input dimension \(b\), running the frozen model on \(\mathcal{D}_{\mathrm{cal}}\) yields the layer input activation matrix \(X\in\R^{b\times a}\), where \(a\) is the total number of calibration token positions. For fixed calibration activations \(X\), we introduce the mask-generation mapping
\[
    \psi_X:\R^{c\times b}\times\{0,\ldots,cb\}\times\{\mathrm{TopK},\mathrm{BottomK}\}
    \to \{0,1\}^{c\times b}.
\]
It takes a weight matrix \(W\in\R^{c\times b}\), a target number of sparse entries \(s\in\{0,\ldots,cb\}\), and a selection direction \(\rho\in\{\mathrm{TopK},\mathrm{BottomK}\}\), then generates the corresponding fine-tuning mask \(\psi_X(W,s,\rho)\). For each weight \(W_{ij}\), it evaluates the score
\[
    A_{ij}=|W_{ij}|\|X_{j:}\|_2.
\]
We use this activation-weighted magnitude score as a selection rule for fine-tuning: the scores are flattened within each layer, and the mask contains either the global top \(s\) entries or the global bottom \(s\) entries. Because the square is monotone on nonnegative values, this ordering is the same as the ordering induced by the squared \algname{Wanda} score in \Cref{eq:wanda_metric}.

Our use of the \algname{Wanda} score differs from the original pruning rule in selection granularity: we flatten scores within each adapted layer and select a global layerwise support, whereas pruning implementations often impose row-wise budgets. We use the term \algname{Wanda}-style to emphasize that we reuse the activation-weighted magnitude score, not the full pruning procedure. Row-wise support selection is a natural follow-up but is outside the scope of the present experiments.

The output is a binary mask with ones at these selected positions:
\begin{equation}
\label{eq:psi_def}
\psi_X(W,s,\rho)_{ij}
\coloneqq
\begin{cases}
1, & (i,j)\text{ is among the }s\text{ largest entries of }A,\quad \rho=\mathrm{TopK},\\
1, & (i,j)\text{ is among the }s\text{ smallest entries of }A,\quad \rho=\mathrm{BottomK},\\
0, & \text{otherwise.}
\end{cases}
\end{equation}
Refer to \Cref{sec:wanda_appendix} for additional examples.

Alongside the \algname{Wanda}-style mask, we also evaluate a PaFi-style magnitude-only rule \citep{liao-etal-2023-parameter}. This variant uses
\[
    B_{ij}=|W_{ij}|
\]
in place of \(A_{ij}\) and applies the same TopK or BottomK selection direction and the same sparse budget. Thus the magnitude-only sparse baseline and \algname{Supra-Mag} differ from \algname{Super} and the default \algname{Supra} only in the ranking score used to construct the fixed sparse support: \(|W_{ij}|\) rather than \(|W_{ij}|\|X_{j:}\|_2\). This lets us test whether the strong sparse supports come from the activation-weighted \algname{Wanda} proxy itself or from the simpler pretrained-magnitude prior used by PaFi.

One can also combine both ends of the score ordering by assigning part of the sparse budget to TopK entries and the remainder to BottomK entries. We report this mixed TopK--BottomK ablation in \Cref{sec:hybrid_super_appendix}; it preserves the same sparse parameter budget, but in our experiments it does not significantly improve over the pure BottomK support, which remains a strong choice in the schedule-selected comparisons.

The resulting mask-generation procedure is summarized in \Cref{alg:super_mask_generation}.

\begin{algorithm}[ht]    
    \caption{SpA mask generation by \algname{Wanda}-style scores}
    \label{alg:super_mask_generation}
    \begin{algorithmic}[1]
    \STATE \textbf{Input:} Adapted-layer index set \(\mathcal A\subseteq\{1,\ldots,m\}\) in a frozen pretrained model, weights \(\{W^{(i)}\in\R^{c_i\times b_i}\}_{i\in\mathcal A}\), tokenized calibration set \(\mathcal{D}_{\mathrm{cal}}\), sparse budgets \(s_i\in\{0,\ldots,c_i b_i\}\), selection direction \(\rho\in\{\mathrm{TopK},\mathrm{BottomK}\}\).
    \STATE Run the frozen model on \(\mathcal{D}_{\mathrm{cal}}\) and collect input activations \(X^{(i)}\in\R^{b_i\times a}\) for each \(i\in\mathcal A\).
    \FOR{each adapted layer \(i\in\mathcal A\)}
        \STATE Compute \(M^{(i)}=\psi_{X^{(i)}}(W^{(i)},s_i,\rho)\) using \Cref{eq:psi_def}.
    \ENDFOR
    \STATE \textbf{Return:} Fixed sparse masks \(\{M^{(i)}\in\{0,1\}^{c_i\times b_i}\}_{i\in\mathcal A}\).
    \end{algorithmic}
\end{algorithm}

Unlike many existing methods, our approach does not require a complex or computationally intensive procedure for selecting the trainable parameters of the sparse adapter. Instead, we adopt a simple strategy based on one calibration pass through the model. We refer to this method as \textbf{Super}--short for \emph{Selective Update of Parameters via Extreme Ranking}.

For layer \(i\), let \(W^{(i)}\) be the frozen pretrained matrix and let \(X^{(i)}\) be the calibration activations collected at that layer. Given a sparse budget \(s_i\) and selection direction \(\rho\), \algname{Super} constructs
\begin{equation}
    M^{(i)} = \psi_{X^{(i)}}(W^{(i)}, s_i,\rho)
    \label{eq:super_mask}
\end{equation}
and trains a sparse delta on this fixed support. We write \(A \odot B\) for the element-wise (Hadamard) product of matrices with the same shape:
\begin{equation}
    \widehat W^{(i)} = W^{(i)} + M^{(i)} \odot U^{(i)},
    \label{eq:super_update}
\end{equation}
where \(U^{(i)} \in \R^{c_i \times b_i}\) is trainable only on the entries selected by \(M^{(i)}\). Equivalently, one may view the selected entries of \(W^{(i)}\) as being updated in place while all unselected entries remain frozen.

The sparse delta is initialized at zero, so the initial adapted layer equals the pretrained layer. For adapted-layer index set \(\mathcal A\), the \algname{Super} objective can be written as
\begin{equation}
    \min_{\{U^{(i)}\}_{i\in\mathcal A}}
    \mathcal{L}\!\left(\mathcal{D}; \{\widehat W^{(i)}\}_{i\in\mathcal A}, \{W^{(j)}\}_{j\notin\mathcal A}, \bar w\right),
    \label{eq:super_objective}
\end{equation}
with all pretrained weights outside the selected sparse entries frozen.

The core idea behind \algname{Super} is to leverage the \algname{Wanda} metric, which estimates the single-weight contribution to layer-wise \(\ell_2\) reconstruction error upon removal. The following identity gives the exact single-weight form used by the score.

\begin{proposition}[Single-weight reconstruction identity]
\label{prop:wanda_single_weight}
Let \(W \in \R^{c \times b}\), \(X \in \R^{b \times a}\), \(Y=WX\), and let \(\|\cdot\|_F\) denote the Frobenius norm. Let \(W^{(-kq)}\) equal \(W\) except that entry \(W_{kq}\) is set to zero. Then
\begin{equation}
    \|(W-W^{(-kq)})X\|_F^2
    =
    W_{kq}^2 \|X_{q:}\|_2^2.
    \label{eq:wanda_single_weight_identity}
\end{equation}
\end{proposition}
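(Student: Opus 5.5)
The argument is a direct computation, so I would simply identify the difference matrix and then evaluate the product. The first step is to note that \(W-W^{(-kq)}\) has exactly one nonzero entry, equal to \(W_{kq}\), located at position \((k,q)\). In other words,
\[
W-W^{(-kq)} = W_{kq}\, e_k e_q^\top,
\]
where \(e_k\in\R^{c}\) and \(e_q\in\R^{b}\) are standard basis vectors.

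Next I would multiply this by \(X\). Because \(e_q^\top X = X_{q:}\), the product becomes
\[
(W-W^{(-kq)})X = W_{kq}\, e_k X_{q:}.
\]
This is the \(c\times a\) matrix whose \(k\)-th row is \(W_{kq}X_{q:}\) and whose other rows are all zero.

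The last step is to take the Frobenius norm. The squared Frobenius norm of a matrix is the sum of the squared Euclidean norms of its rows. Here only one row is nonzero, so the sum reduces to \(\|W_{kq}X_{q:}\|_2^2\). By homogeneity of the norm this equals \(W_{kq}^2\|X_{q:}\|_2^2\), which is \Cref{eq:wanda_single_weight_identity}. One could also phrase this as \(\|uv^\top\|_F=\|u\|_2\|v\|_2\) for rank-one matrices.

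No step here is a real obstacle. The only point that needs care is the indexing convention: \(X_{q:}\) must be the \(q\)-th row of \(X\), since that is the row that multiplies column \(q\) of \(W\). Getting this right is what makes the column index of the zeroed weight select the correct activation row. I would also note in passing that the signal \(Y=WX\) does not enter the identity at all, because only the difference of the two outputs matters.
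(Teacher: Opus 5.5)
Your proposal is correct and follows the paper's proof essentially verbatim. You write the difference as the single-entry matrix \(W_{kq} e_k e_q^\top\), obtain \((W-W^{(-kq)})X = W_{kq}\, e_k X_{q:}\), and read off the squared Frobenius norm from its one nonzero row, which is exactly the paper's argument stated in more detail.
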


\begin{proof}
The difference \(W-W^{(-kq)}\) has a single nonzero entry, so \((W-W^{(-kq)})X = W_{kq} e_k X_{q:}\), where \(e_k\) is the \(k\)-th standard basis vector. Taking the Frobenius norm gives \(W_{kq}^2\|X_{q:}\|_2^2\).
\end{proof}

Importantly, this identity concerns the reconstruction effect of deleting a pretrained weight, whereas \algname{Super} uses the same score only as a support-selection heuristic for additive sparse updates. Once a coordinate is selected, the learned delta is not constrained to equal the negative pretrained weight. Therefore, the \algname{Wanda}-style score should not be interpreted as an optimality criterion for sparse fine-tuning. Rather, it provides a training-free ordering of coordinates by their estimated sensitivity in the frozen pretrained model.

For a set of selected weights, the exact reconstruction effect also contains cross-terms between activation rows. Writing \(\langle \cdot,\cdot\rangle\) for the Euclidean inner product, for any binary mask \(M\in\{0,1\}^{c\times b}\),
\begin{equation}
    \|(M\odot W)X\|_F^2
    =
    \sum_{k=1}^{c}\sum_{q=1}^{b}\sum_{r=1}^{b}
    M_{kq}M_{kr}W_{kq}W_{kr}\langle X_{q:},X_{r:}\rangle.
    \label{eq:wanda_cross_terms}
\end{equation}
The independent bottom-\(s_i\) selection used by \algname{Super} corresponds to the diagonal approximation
\begin{equation}
    \sum_{k=1}^{c}\sum_{q=1}^{b}
    M_{kq}W_{kq}^2\|X_{q:}\|_2^2,
    \label{eq:wanda_diagonal_approx}
\end{equation}
which drops all \(q\neq r\) cross-terms. Thus, \algname{Wanda}-style selection should be understood as a diagonal reconstruction heuristic rather than an exact global optimization criterion.

Our central hypothesis is that training can be concentrated on parameters ordered by this reconstruction proxy to obtain useful adaptation under small parameter-count budgets. By selecting a fixed support according to the \algname{Wanda}-style metric, \algname{Super} provides a simple method for sparse fine-tuning that aligns with this reconstruction-based heuristic.

\subsection{\texorpdfstring{\algname{Supra}}{Supra}: Budget-Matched Low-Rank and Sparse Adaptation.}
\label{sec:supra}

\begin{figure*}[t!]
    \centering
    \includegraphics[width=0.8\textwidth]{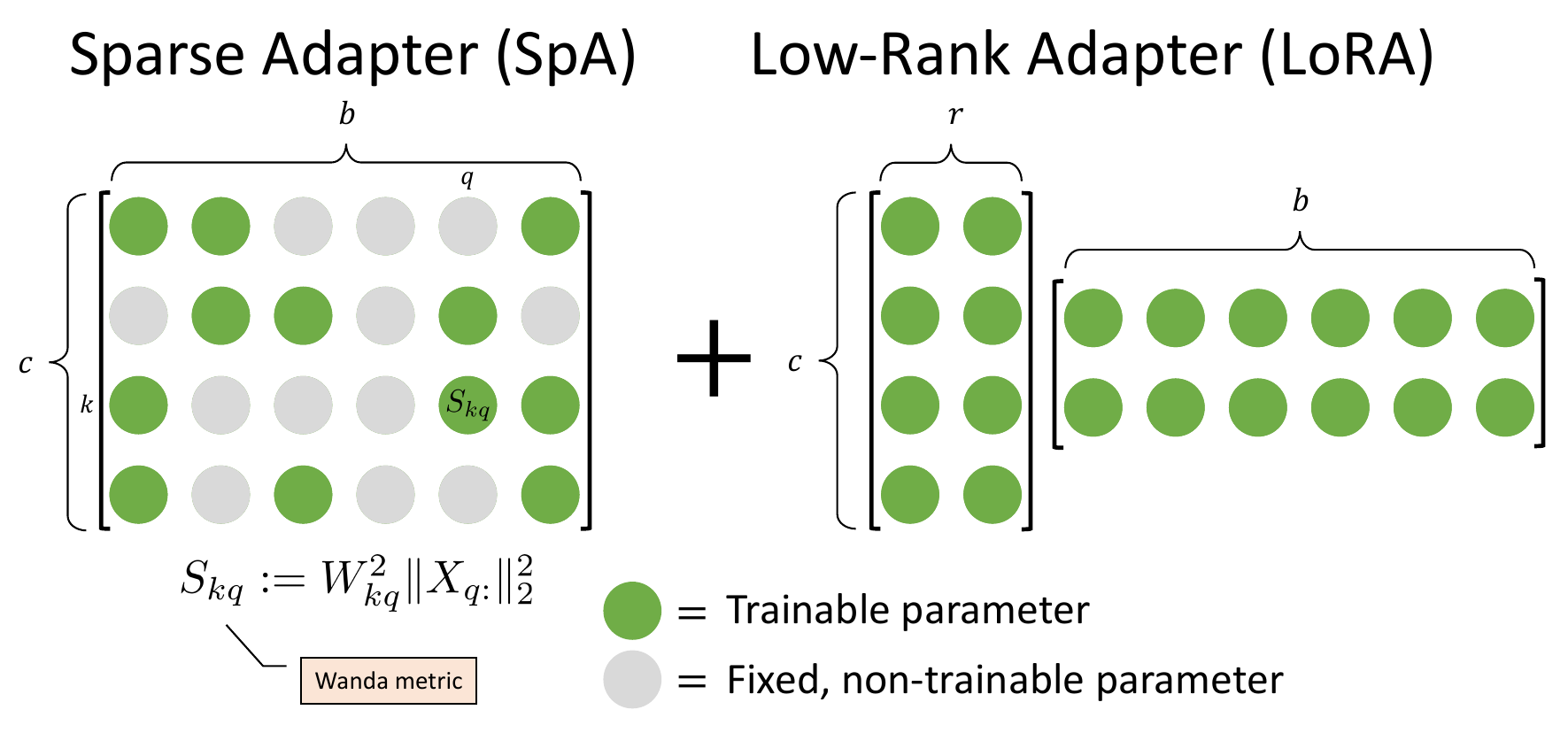}
    \captionsetup{width=\textwidth}
    \captionsetup{aboveskip=5pt, belowskip=0pt}
    \caption{\algname{Supra} combines a sparse adapter based on \algname{Super} weights with a low-rank adapter. To select the parameters for training in the sparse adapter, we employ the \algname{Wanda} metric (Equation~\ref{eq:wanda_metric}). The rank $r$ of the low-rank adapter is determined from the matched parameter-count budget using Equation~\ref{eq:lora_adaptive_r}.}
    \label{fig:supra_as_a_sum_of_spa_and_lora}
\end{figure*}

\algname{Supra} is a hybrid fine-tuning strategy that combines two complementary approaches: a sparse adapter selected via the \algname{Super} strategy and a low-rank adapter based on \algname{LoRA}. This design belongs to the same broad family as hybrid sparse--low-rank methods such as \algname{RoSA}~\citep{nikdan2024rosa}; the main distinction is that our sparse support is selected once using a training-free score, and we explicitly compare \algname{Wanda}-style and magnitude-only BottomK supports under a matched scalar-parameter budget.

For layer \(i\), \algname{Supra} uses the adapted weight
\begin{equation}
    \widehat W^{(i)}
    =
    W^{(i)}
    +
    M^{(i)} \odot U^{(i)}
    +
    \gamma_i L^{(i)} R^{(i)},
    \label{eq:supra_update}
\end{equation}
where \(M^{(i)}=\psi_{X^{(i)}}(W^{(i)},s_i,\rho)\), \(U^{(i)}\) is trainable only on the support of \(M^{(i)}\), \(L^{(i)} \in \R^{c_i \times r_i}\), \(R^{(i)} \in \R^{r_i \times b_i}\), and \(\gamma_i\) is the low-rank scaling factor. The selection direction \(\rho\) gives the TopK or BottomK sparse-support variant, and the pretrained matrix \(W^{(i)}\) is otherwise frozen.

The \algname{Supra} training objective optimizes only the adapter variables:
\begin{equation}
    \min_{\{U^{(i)},L^{(i)},R^{(i)}\}_{i\in\mathcal A}}
    \mathcal{L}\!\left(\mathcal{D}; \{\widehat W^{(i)}\}_{i\in\mathcal A}, \{W^{(j)}\}_{j\notin\mathcal A}, \bar w\right),
    \label{eq:supra_objective}
\end{equation}
where \(\mathcal A\) is the adapted-layer index set and the pretrained model parameters are frozen.

The intuition behind this design stems from the observation that \algname{LoRA}, by construction, learns updates in a low-rank parameterization. Although the \algname{LoRA} adapter can be expanded into a full matrix, its implementation uses only \(r_i(c_i+b_i)\) trainable scalar parameters. This evolving low-rank component lets the model adjust task-specific directions during fine-tuning.

In contrast, sparse adaptation by \algname{Super} selects a fixed subset of parameters to train. These parameters are chosen based on their estimated layer-wise influence under the \algname{Wanda}-style reconstruction proxy. Once selected, this subset remains constant during training, resulting in updates restricted to a fixed support.

By combining these two adapters, \algname{Supra} balances two update types: the sparse adapter directly optimizes a fixed \algname{Wanda}-ordered subset of pretrained weights, while the \algname{LoRA} component adapts to the target task by learning task-specific directions in the parameter space. In essence:

\begin{itemize}
    \item The sparse adapter focuses training on a fixed subset of parameters ordered by the \algname{Wanda}-style reconstruction proxy.
    \item The low-rank adapter enables \textit{task-specific adaptation} by learning new subspaces of importance in a structured and efficient manner.
\end{itemize}

This combination lets \algname{Supra} interpolate between sparse and low-rank adaptation under explicit parameter-count budgets. The explicit control over the parameter budget and its division between sparse and low-rank components enables practitioners to tailor the method to different resource constraints or adaptation scenarios.

\subsection{Budget-Matched Rank Conversion in \texorpdfstring{\algname{Supra}}{Supra}}

\algname{Supra} determines the low-rank adapter rank \( r_i \) from a per-layer parameter-count budget \(T_i\). Using \(\lfloor\cdot\rfloor\) for the floor function, this paper uses two budget conventions:

\begin{align}
    T_i^{\mathrm{rank}}(r_0) &= r_0(c_i+b_i),
    \label{eq:rank_equivalent_budget}\\
    T_i^{\mathrm{density}}(p) &= \lfloor p c_i b_i \rfloor.
    \label{eq:density_equivalent_budget}
\end{align}
The first is the \emph{\algname{LoRA}-rank-equivalent} budget: it gives \algname{Super} or \algname{Supra} the same number of trainable scalar parameters that a rank-\(r_0\) \algname{LoRA} adapter would use in that layer. This is the budget used in all reported experiments so that \algname{Super} and \algname{Supra} match the corresponding \algname{LoRA} baseline by number of trainable scalar parameters. The second is a density-equivalent sparse budget; it is useful conceptually but is not the convention used for the main comparisons. A fixed density \(p\) and a fixed rank-equivalent budget are generally different unless the implied density is layer-dependent:
\begin{equation}
    p_i(r_0)=\frac{r_0(c_i+b_i)}{c_i b_i}.
    \label{eq:rank_equivalent_density}
\end{equation}

To divide a chosen budget \(T_i\) between the low-rank and sparse adapters, we introduce \(\lambda \in [0,1]\), the \textit{low-rank budget fraction}. We use \(\lambda\) rather than \(\alpha\) to avoid confusion with the standard \algname{LoRA} scaling parameter \(\alpha_{\mathrm{LoRA}}\).

\begin{itemize}
    \item If \( \lambda = 1 \), the target budget is assigned to the low-rank component. Under \(T_i=T_i^{\mathrm{rank}}(r_0)\), this recovers rank-\(r_0\) \algname{LoRA}.
    \item If \( \lambda = 0 \), the full budget is allocated to the sparse adapter, resulting in the sparse-only \algname{Super} variant.
    \item If \( \lambda = 0.5 \), the target budget is approximately split between low-rank and sparse adapters.
\end{itemize}

Under this scheme, the low-rank adapter receives approximately \( \lambda T_i \) trainable scalar parameters, while the sparse adapter is assigned the remaining parameter-count budget.

Unlike sparse adaptation, which directly operates on a fixed number of parameters, low-rank adaptation requires specifying a rank \( r_i \). To bridge this gap, we define the following budget-matched rule for selecting \( r_i \) based on \(\lambda\) and \(T_i\):
\begin{equation}
    \label{eq:lora_adaptive_r}
    r_i = \left\lfloor \frac{\lambda T_i}{c_i + b_i} \right\rfloor.
\end{equation}

This formulation ensures that the low-rank adapter uses
\begin{equation}
    l_i = r_i (c_i + b_i)
\end{equation}
parameters, and the sparse adapter is assigned the remaining
\begin{equation}
    s_i = T_i - l_i
\end{equation}
parameters. Therefore \(s_i \geq 0\), and the total number of trainable parameters remains fixed: \(s_i + l_i = T_i\).

Under the rank-equivalent budget used in the main experiments, \(T_i=T_i^{\mathrm{rank}}(r_0)=r_0(c_i+b_i)\), so the conversion simplifies to
\begin{equation}
    r_i=\lfloor \lambda r_0\rfloor,
    \qquad
    s_i = \bigl(r_0-\lfloor \lambda r_0\rfloor\bigr)(c_i+b_i).
    \label{eq:rank_equivalent_simplification}
\end{equation}

Due to integer rank constraints in \eqref{eq:lora_adaptive_r}, the realized low-rank ratio may deviate slightly from the target value \( \lambda \). In particular, \(l_i/(l_i+s_i)\) need not equal \( \lambda \). This discrepancy arises from the floor operation, which ensures that \( r_i \) remains an integer and that the low-rank component never exceeds the assigned budget. Under a density budget \(T_i^{\mathrm{density}}(p)\), \(\lambda=1\) may leave a sparse remainder unless that remainder is explicitly discarded; under the rank-equivalent budget \(T_i^{\mathrm{rank}}(r_0)\), \(\lambda=1\) gives \(s_i=0\) and recovers rank-\(r_0\) \algname{LoRA}.

\section{Experiments}
\label{sec:experiments}

\subsection{Experimental Setup}

\paragraph{Datasets}

The main experiments use the \textbf{Math17K} instruction-tuning dataset. We split the fine-tuning data into training and validation subsets and use the held-out validation split only for learning-rate and checkpoint selection.

To the best of our knowledge and according to the dataset construction used in our experiments, we use Math17K rather than the earlier Math10K setup because Math10K contained examples overlapping with some of our evaluation benchmarks. The Math17K split used in this paper excludes examples from AddSub, MultiArith, SingleEq, GSM8K, AQuA, and SVAMP, so the evaluation benchmarks are not part of the fine-tuning data.

We evaluate on six arithmetic-reasoning benchmarks: AddSub~\citep{hosseini2014learning}, MultiArith~\citep{roy2016solving}, SingleEq~\citep{koncel2015parsing}, GSM8K~\citep{cobbe2021training}, AQuA~\citep{ling2017program}, and SVAMP~\citep{patel2021nlp}. Thus the main protocol is task-specific arithmetic fine-tuning followed by arithmetic evaluation.

\paragraph{Training Details}

Except for the full-parameter reference rows, all reported trainable methods use adapter-based fine-tuning, where the pretrained model remains frozen and only adapter parameters are optimized. The reported experiments use two Llama-family models: \texttt{Llama-3.2-1B} and \texttt{Meta-Llama-3-8B}. For adapter methods, we adapt all standard linear projections in each Llama block:
\[
\texttt{q\_proj},\ \texttt{k\_proj},\ \texttt{v\_proj},\ \texttt{o\_proj},\ \texttt{gate\_proj},\ \texttt{up\_proj},\ \texttt{down\_proj}.
\]
This is the full set of linear modules adapted in all reported adapter experiments.

\textbf{Compared methods.} We compare \algname{LoRA}, \algname{SIFT} (TopK), \algname{SIFT} (RandK), \algname{RoSA}, \algname{Super} (TopK), \algname{Super} (BottomK), \algname{Super} (RandK), \algname{Magnitude} (TopK/BottomK), and \algname{Supra} with \( \lambda \in \{0.3,0.5,0.8\} \) under both \algname{Wanda}-selected and magnitude-selected sparse supports. For \algname{Supra}, \(\lambda\) is the low-rank budget fraction defined in \Cref{sec:supra}; in method names, TopK denotes selecting the largest scores and BottomK denotes selecting the smallest scores under the same budget. The \algname{Magnitude} rows are PaFi-style magnitude-mask sparse fine-tuning baselines using the score \(|W_{ij}|\), while \algname{Super} and the default \algname{Supra} rows use the activation-weighted \algname{Wanda} score. The Math17K one- and three-epoch variant grids for these methods, including accuracy and perplexity, are reported in the appendix; the hybrid TopK--BottomK ablation is collected separately in \Cref{sec:hybrid_super_appendix}. Each table row also lists the validation-selected learning rate.

\textbf{Parameter budgets.} The main run uses the rank-equivalent budget \(r_0=8\). For a baseline \algname{LoRA} rank \(r_0\), the target per-layer parameter-count budget is \(T_i^{\mathrm{rank}}(r_0)=r_0(c_i+b_i)\), matching the number of trainable scalar parameters in rank-\(r_0\) \algname{LoRA}. Sparse methods train the same number of scalar entries in the corresponding layer, and \algname{Supra} splits the same numerical budget between sparse and low-rank components. Due to integer ranks and method-specific implementation details, the realized trainable-parameter counts may differ slightly, but all adapter methods use approximately the same model-specific rank-8 budget: \(5.6\)M trainable parameters for \texttt{Llama-3.2-1B} and \(21.0\)M for \texttt{Meta-Llama-3-8B}. Our comparisons match trainable scalar counts across adapter methods. Implementation-level memory, checkpoint-size, calibration-time, and throughput measurements are reported separately in \Cref{sec:efficiency_measurements}; matching trainable scalar counts alone does not imply identical runtime behavior.

\textbf{Learning-rate selection.} Because different PEFT parameterizations can have different optimization scales, we tune the learning rate independently for each method using the same candidate grid:
\[
5\cdot 10^{-5},\ 10^{-4},\ 5\cdot 10^{-4},\ 10^{-3},\ 5\cdot 10^{-3},\ 10^{-2},\ 5\cdot 10^{-2},\ 10^{-1}.
\]
Each method is trained at all candidate learning rates, and a single learning rate is selected for that method using the held-out Math17K validation negative log-likelihood. The reported test accuracies and perplexities for a method all come from this one validation-selected learning rate; we do not select learning rates using test benchmark accuracy.

\textbf{Optimization and calibration.} In the Math17K comparisons, we fine-tune each candidate run for either one or three epochs, with batch size 16, maximum sequence length 256, and 100 warmup steps. Training uses Adam with weight decay 0 under the standard causal language-modeling objective. Unless stated otherwise, \algname{LoRA} uses \( \alpha_{\mathrm{LoRA}}=16 \), dropout 0.05, and no bias training. The compact main-text tables report, for each method, the better of its one- and three-epoch runs by average benchmark accuracy; the appendix keeps the schedules separate. For \algname{Super} and the default \algname{Supra} variants, the \algname{Wanda}-style sparse masks are computed from 128 C4 calibration samples. The magnitude-only variants do not use a calibration pass; they rank candidate sparse coordinates by \(|W_{ij}|\). When a sparse support is used, it is fixed before fine-tuning and only the selected sparse entries are optimized during Math17K fine-tuning. \Cref{fig:math17k_optimization_curves} shows the training and held-out validation loss curves for the \texttt{Meta-Llama-3-8B} three-epoch Math17K runs, illustrating the overall optimization behavior under this protocol and the validation-loss signal used for learning-rate selection. To check whether in-domain input statistics materially change the selected sparse support, we also ran a targeted math-calibrated ablation for \texttt{Llama-3.2-1B} on Math7K: the fine-tuning setup is fixed, while the sparse-mask calibration source is changed from C4 to Math7K. The per-benchmark results in \Cref{tab:math7k_calibration_source_ablation} show that this calibration change has only a small effect in that one-epoch setting.

\begin{table}[htbp]
\centering
\scriptsize
\caption{Calibration-source ablation for sparse-only \algname{Super}. Model: \texttt{Llama-3.2-1B}; fine-tuning dataset: Math7K; training budget: one epoch; rank-equivalent budget: \(r_0=8\). The Base row is frozen. C4 rows use the default C4 calibration samples for the \algname{Wanda}-style score; Math7K rows compute the same score on Math7K calibration samples. Trainable rows use approximately \(5.6\)M trainable parameters and the validation-selected learning rate. Results are exact-answer accuracy (\%).}
\label{tab:math7k_calibration_source_ablation}
\resizebox{\textwidth}{!}{%
\begin{tabular}{lccccccccc}
\toprule
\textbf{Method} & \textbf{Calibration} & \textbf{Selected LR} & \textbf{AddSub} & \textbf{MultiArith} & \textbf{SingleEq} & \textbf{GSM8K} & \textbf{AQuA} & \textbf{SVAMP} & \textbf{Average} \\
\midrule
\textsc{Base} (frozen) & -- & -- & 13.67 & 4.67 & 21.46 & 2.81 & 21.65 & 11.60 & 12.64 \\
\midrule
\algname{Super} (TopK) & C4 & $5\cdot10^{-4}$ & 35.95 & 63.83 & 48.62 & 21.46 & 21.65 & 29.00 & 36.75 \\
\algname{Super} (TopK) & Math7K & $5\cdot10^{-4}$ & 38.23 & 61.83 & 48.62 & 22.06 & 20.08 & 29.60 & 36.74 \\
\algname{Super} (BottomK) & C4 & $10^{-3}$ & 40.76 & 59.33 & 52.56 & 21.76 & 17.32 & 32.30 & \textbf{37.34} \\
\algname{Super} (BottomK) & Math7K & $10^{-3}$ & 41.01 & 57.83 & 51.38 & 20.17 & 20.08 & 32.20 & 37.11 \\
\bottomrule
\end{tabular}
}
\end{table}

\begin{figure}[htbp]
\centering
\begin{minipage}{0.49\textwidth}
\centering
\includegraphics[width=\linewidth]{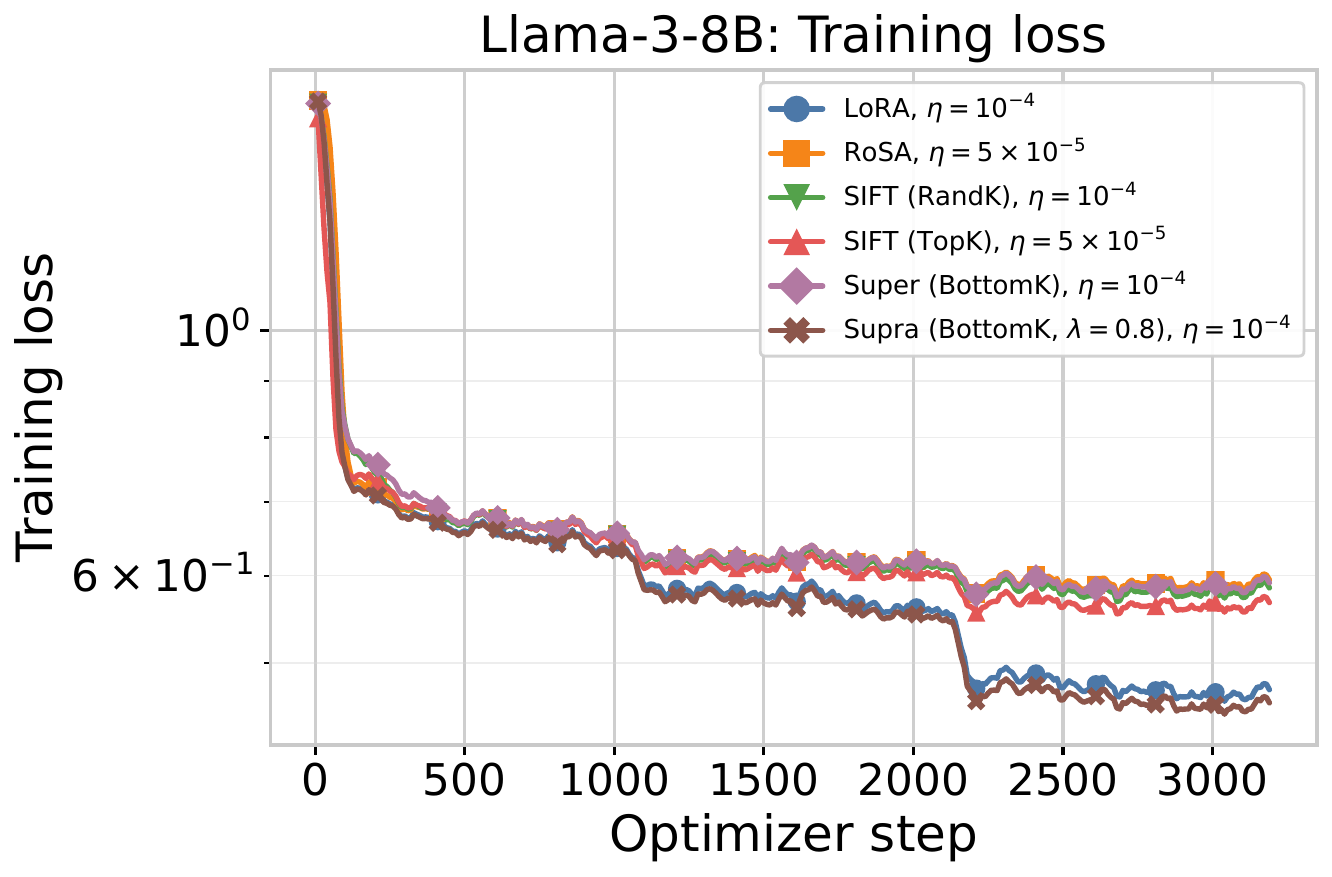}
\end{minipage}
\hfill
\begin{minipage}{0.49\textwidth}
\centering
\includegraphics[width=\linewidth]{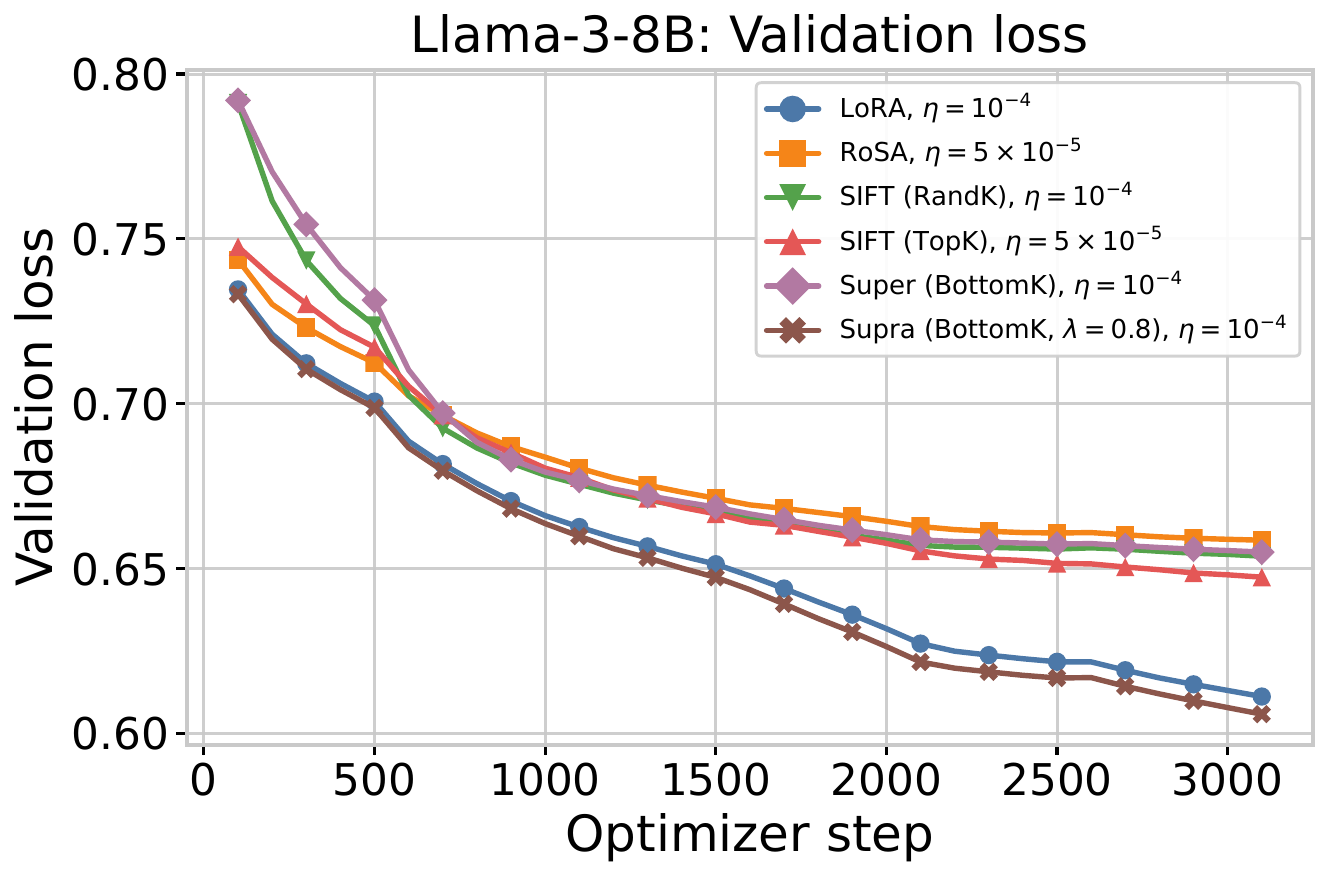}
\end{minipage}
\caption{Optimization curves for \texttt{Meta-Llama-3-8B} Math17K fine-tuning over three epochs. Left: training loss, plotted with a log-scaled \(y\)-axis. Right: held-out Math17K validation loss. Each curve uses the validation-selected learning rate for the corresponding method under the rank-equivalent \(r_0=8\) budget.}
\label{fig:math17k_optimization_curves}
\end{figure}

\textbf{Evaluation metric.} We report exact-answer accuracy on each math benchmark and the arithmetic mean over benchmarks. For generation-based accuracy, we use the arithmetic evaluation pipeline to extract either the numeric answer or the answer option letter for AQuA. Perplexity is reported only as an auxiliary language-modeling diagnostic and is not used for the main arithmetic conclusion. Exact-answer accuracy is the primary metric. Lower perplexity does not necessarily imply higher benchmark accuracy because the language-modeling loss can be dominated by non-answer tokens, while the benchmark metric depends only on the extracted final numeric answer or multiple-choice option. In our evaluation pipeline, perplexity is computed over the gold target text appended after the prompt, with prompt tokens masked from the loss; the target may include rationale or answer text depending on the dataset record, plus the end-of-sequence token. Per-dataset perplexity is the exponentiated token-weighted mean negative log-likelihood over up to 120 examples per benchmark, and the reported average is an aggregate diagnostic over the same benchmark set.

\textbf{Hardware and reproducibility.} The main experiment is run on the KAUST ORIX cluster using 4 NVIDIA H200 GPUs with 140GB of memory per GPU. The implementation is available at \url{https://github.com/vectozavr/SuperTuning}. The launch pipeline exposes the random seed, logs all adapter runs to Weights \& Biases, records the selected learning rate, and writes machine-readable result files for table construction. Repeating the full benchmark suite across multiple random seeds would be computationally expensive, so the reported tables use a single controlled seed (seed 0); estimating multi-seed uncertainty is left for future work.

\subsection{Efficiency Measurements}
\label{sec:efficiency_measurements}

Although all adapter methods are matched by trainable scalar count, this does not automatically imply identical memory or runtime behavior. We therefore report implementation-level efficiency measurements for representative Math17K profiling runs in \Cref{tab:efficiency_measurements}. The profiling setup uses the same models, adapted modules, rank-equivalent budget \(r_0=8\), batch size 16, micro-batch size 16, maximum sequence length 256, C4 calibration source for \algname{Wanda}-style masks, and NVIDIA H200 hardware as the main experiments. To keep the measurement inexpensive, each row runs 20 optimizer steps with learning rate \(5\cdot10^{-4}\); these runs are used only for efficiency profiling and do not add accuracy baselines.

These measurements should be interpreted as costs of our current implementation rather than theoretical sparse-kernel speedups. The optimizer-state column reports measured Adam tensor storage, not a normalized two-FP32-state estimate: for full fine-tuning the model parameters are bf16 in these runs, so the two Adam moment tensors occupy approximately the same memory as one FP32 tensor per trainable parameter. The \algname{Super}, \algname{Magnitude}, \algname{Supra}, and \algname{Supra-Mag} implementations store sparse trainable values and integer index buffers, so their measured optimizer-state storage is sparse. However, the current sparse linear layer forms a dense effective weight for the matrix multiply and computes a dense gradient matrix before gathering the selected sparse coordinates. Thus, sparse trainable-parameter counts reduce adapter checkpoint size and measured optimizer-state storage, but do not by themselves guarantee wall-clock speedups. \algname{SIFT} also stores sparse trainable values, but its selection/update path requires dense gradients; this is reflected in its peak-memory measurements. Calibration time is reported only for methods with a separate C4 mask-construction pass; methods whose sparse support is magnitude-only, low-rank-only, full fine-tuning, or selected inside training are marked ``--''.

\begin{table}[!t]
\centering
\scriptsize
\caption{Implementation-level efficiency measurements for Math17K profiling runs. Both models use rank-equivalent budget $r_0=8$, batch size 16, micro-batch size 16, and 20 optimizer steps on a single NVIDIA H200 GPU. Checkpoint size is the saved full-model checkpoint for full fine-tuning and the saved adapter state otherwise. Peak memory is peak CUDA allocated memory during the measured training region. Adam state is measured directly from tensor storage present in the instantiated optimizer after the profiling steps; it reflects the actual parameter dtypes and optimizer-state representation used by each method rather than a uniform two-FP32-state estimate. In particular, full fine-tuning stores two bf16 Adam moment tensors in these runs, so its state occupies approximately the same memory as one FP32 tensor per trainable parameter.}
\label{tab:efficiency_measurements}
\resizebox{\textwidth}{!}{%
\begin{tabular}{clrrrrrrrr}
\toprule
\textbf{Model} & \textbf{Method} & \textbf{Trainable Params} & \textbf{Ckpt. Size} & \textbf{Calib. Time} & \textbf{Peak Mem.} & \textbf{Adam State} & \textbf{Steps/s} & \textbf{Tokens/s} & \textbf{Wall Time} \\
\midrule
\multirow{8}{*}{\texttt{Llama-3.2-1B}} & Full FT & 1.24B & 2.32 GiB & -- & 20.88 GiB & 4.60 GiB & 5.688 & 17819 & 3.5 s \\
 & \algname{RoSA} & 5.64M & 5.4 MiB & -- & 17.98 GiB & 10.8 MiB & 4.368 & 13685 & 4.6 s \\
 & \algname{SIFT} (TopK) & 5.64M & 107.6 MiB & -- & 21.28 GiB & 43.0 MiB & 7.073 & 22160 & 2.8 s \\
 & \algname{LoRA} & 5.64M & 21.6 MiB & -- & 18.04 GiB & 43.0 MiB & 5.808 & 18197 & 3.4 s \\
 & \algname{Super} (BottomK) & 5.64M & 43.1 MiB & 2.8 s & 15.81 GiB & 43.0 MiB & 7.933 & 24855 & 2.5 s \\
 & \algname{Magnitude} (BottomK) & 5.64M & 43.1 MiB & -- & 15.81 GiB & 43.0 MiB & 8.287 & 25962 & 2.4 s \\
 & \algname{Supra} (BottomK, $\lambda=0.8$) & 5.64M & 43.1 MiB & 2.8 s & 19.55 GiB & 43.0 MiB & 5.831 & 18269 & 3.4 s \\
 & \algname{Supra-Mag} (BottomK, $\lambda=0.3$) & 5.64M & 43.1 MiB & -- & 19.54 GiB & 43.0 MiB & 5.743 & 17993 & 3.5 s \\
\midrule
\multirow{8}{*}{\texttt{Meta-Llama-3-8B}} & Full FT & 8.03B & 14.97 GiB & -- & 77.91 GiB & 29.92 GiB & 1.706 & 5344 & 11.7 s \\
 & \algname{RoSA} & 20.97M & 20.1 MiB & -- & 54.72 GiB & 40.0 MiB & 1.850 & 5795 & 10.8 s \\
 & \algname{SIFT} (TopK) & 20.97M & 400.1 MiB & -- & 78.50 GiB & 160.0 MiB & 1.874 & 5870 & 10.7 s \\
 & \algname{LoRA} & 20.97M & 80.2 MiB & -- & 54.90 GiB & 160.0 MiB & 2.044 & 6403 & 9.8 s \\
 & \algname{Super} (BottomK) & 20.97M & 160.1 MiB & 11.3 s & 46.22 GiB & 160.0 MiB & 2.252 & 7055 & 8.9 s \\
 & \algname{Magnitude} (BottomK) & 20.97M & 160.1 MiB & -- & 46.22 GiB & 160.0 MiB & 2.249 & 7046 & 8.9 s \\
 & \algname{Supra} (BottomK, $\lambda=0.8$) & 20.97M & 160.3 MiB & 11.2 s & 60.46 GiB & 160.0 MiB & 1.716 & 5377 & 11.7 s \\
 & \algname{Supra-Mag} (BottomK, $\lambda=0.3$) & 20.97M & 160.3 MiB & -- & 60.44 GiB & 160.0 MiB & 1.689 & 5291 & 11.8 s \\
\bottomrule
\end{tabular}
}
\end{table}

\subsection{Comparison with other fine-tuning methods}

This subsection discusses the rank-equivalent \(r_0=8\) arithmetic results shown earlier in \Cref{tab:main_text_llama1b_compact,tab:main_text_llama8b_compact}. Each adapter method uses approximately the same rank-equivalent trainable-parameter budget, and each reported row uses the learning rate selected by held-out Math17K validation loss for the corresponding epoch budget. The compact tables select the better one- or three-epoch run for each method by average benchmark accuracy; for \algname{Super} and \algname{Supra}, they also select the best observed variant from the corresponding support and budget-split grid. Because the schedule-selected compact rows for our proposed methods use BottomK supports, we use BottomK as the default support for \algname{Super} and \algname{Supra} unless stated otherwise. The complete per-benchmark accuracy and perplexity tables are reported in \Cref{sec:llama1b_math17k_1epoch_appendix,sec:llama1b_math17k_3epoch_appendix,sec:llama8b_math17k_1epoch_appendix,sec:llama8b_math17k_appendix}.

For the 1B model, \algname{Supra} (BottomK, \(\lambda=0.8\)) reaches 62.23\% average exact-answer accuracy, the strongest observed adapter result in the comparison. The strongest external baseline is \algname{LoRA} at 61.07\%, so \algname{Supra} achieves a higher observed average by 1.16 percentage points. The sparse-only \algname{Super} (BottomK) row reaches 55.46\%, below \algname{LoRA} but above the corresponding TopK sparse row in the full appendix table.

For the 8B model, the strongest schedule-selected result is obtained by \algname{Supra-Mag} (BottomK, \(\lambda=0.3\)), which reaches 79.12\% average accuracy. The magnitude-only sparse baseline is essentially tied at 79.02\%, while the strongest \algname{Wanda}-selected \algname{Supra} row reaches 78.66\%. Thus, the 8B results should be interpreted less as evidence that activation weighting is always superior and more as evidence that low-score pretrained-weight supports, especially low-magnitude supports, are highly effective in this setting. Combining such supports with \algname{LoRA} can preserve or slightly improve performance under the same scalar-parameter budget. Relative to external baselines, \algname{Supra-Mag} achieves a higher observed average than \algname{SIFT} (RandK) by 1.29 points and than \algname{LoRA} by 5.95 points.

Full fine-tuning is included as an unbudgeted full-parameter reference rather than as a PEFT baseline, since it updates all pretrained parameters. We do not interpret these full fine-tuning rows as fully optimized upper bounds; they are protocol-matched full-parameter reference runs using the same validation-based learning-rate selection framework. On \texttt{Llama-3.2-1B}, full fine-tuning reaches 59.77\% average accuracy after one epoch and 65.70\% after three epochs, while the best rank-8-equivalent adapter reaches 59.69\% and 62.23\% under the corresponding one- and three-epoch budgets. On the 8B model, several adapter configurations achieve higher average accuracy than the full-parameter reference. We therefore use full fine-tuning only as a reference point for the cost and behavior of updating all parameters under the same training setup.

\section{Discussion}
\label{sec:discussion}
The main empirical conclusion is that fixed sparse adaptation supports can be highly effective for parameter-efficient fine-tuning. Under the same rank-equivalent trainable-parameter budget and the same validation-based learning-rate selection protocol, the strongest sparse and sparse--low-rank variants achieve the best observed average arithmetic accuracy in the single-seed, schedule-selected compact comparisons on both tested model sizes. On the 1B model, \algname{Supra} (BottomK, \(\lambda=0.8\)) reaches 62.23\% average accuracy, a higher observed average than the strongest external baseline, \algname{LoRA}, by 1.16 percentage points. On the 8B model, \algname{Supra-Mag} (BottomK, \(\lambda=0.3\)) reaches 79.12\%, nearly tied with the magnitude-only sparse baseline at 79.02\% and 1.29 points above the strongest external baseline, \algname{SIFT} (RandK). These gains are obtained with simple fixed mask construction and without gradient or curvature-based mask search.

The strongest schedule-selected sparse supports usually come from the low-score end of the ordering. This pattern appears for both the \algname{Wanda}-style activation-weighted score and the PaFi-style magnitude-only score. However, the best direction is not universal: in some fixed-schedule sparse-only runs, TopK or mixed TopK--BottomK supports can outperform pure BottomK. We therefore view BottomK as a strong empirical default in our setting rather than as an optimal selection rule.

The hybrid \algname{Supra} results further suggest that sparse and low-rank adaptation can be complementary under a fixed parameter budget, but the best allocation is not universal. \algname{Supra} is strongest among the proposed methods on the 1B model, while \algname{Supra-Mag} is strongest on the 8B model and is essentially tied with the magnitude-only sparse baseline. This suggests that the best allocation between sparse updates and low-rank updates may depend on model scale, optimization budget, and task distribution, but the overall pattern is encouraging: sparse updates are competitive with standard PEFT baselines in these single-seed comparisons, and combining them with low-rank adapters can preserve or improve performance at the same trainable-parameter count.

Overall, our results support a practical view of \algname{Super} and \algname{Supra}: simple low-score orderings can identify useful sparse adaptation coordinates. A cheap activation-based calibration pass provides one such ordering through the \algname{Wanda}-style score, while magnitude-only ordering is a strong PaFi-style alternative. More broadly, the experiments suggest that pruning-inspired signals can be repurposed for fine-tuning.

\section{Limitations}
\label{sec:limitations}

Our study has several limitations that we hope future work will address.

\paragraph{Single-seed uncertainty.}
All main tables use a single controlled seed. This is common in expensive LLM fine-tuning studies but means that small differences between methods should be interpreted as observed single-seed gaps rather than statistically established improvements. Multi-seed uncertainty estimates are left for future work.

\paragraph{Schedule-selected summaries.}
The compact main-text tables select the better one- or three-epoch schedule for each method family and, for \algname{Super}/\algname{Supra}, the best reported variant. The appendix provides the fixed-schedule grids. The compact tables should therefore be interpreted as best-observed schedule-selected summaries rather than universal dominance claims.

\paragraph{Scope of evaluation.}  
The quantitative comparisons in this version focus on arithmetic-reasoning evaluation with \texttt{Llama-3.2-1B} and \texttt{Meta-Llama-3-8B} after Math17K fine-tuning. This is a limited slice of the broad application space of LLMs. In particular, 70B-scale models, other architecture families, and language-generation tasks that hinge on open-ended semantics (e.g., summarization, dialogue, or code synthesis) may stress different model components than symbolic math reasoning. The 8B results in \Cref{sec:llama8b_math17k_appendix} are useful as a larger-checkpoint check, but should not be read as a scaling law.

\paragraph{Dependence on the \algname{Wanda} saliency proxy.}  
Our new algorithm \algname{Super} (and hence the default \algname{Supra}) selects trainable weights using an activation-weighted magnitude heuristic inherited from the \algname{Wanda} pruning metric. Although this choice is fast and training-free, it is still an \emph{approximation} of true importance; weights that appear non-salient under the proxy may in fact become critical once the task distribution shifts or once other parameters are updated. The magnitude-only baselines show that removing activation weighting can also work well in some settings. We therefore do not claim that the current mask-selection rule is optimal, or even that a pruning-saliency score should always be used in the same direction for fine-tuning. Further ablations against activation-only masks, row-wise variants, alternative calibration sets, and more expressive but costlier criteria (e.g.\ curvature-aware saliency or gradient-flow statistics) remain important.

\paragraph{Fixed sparse support.}  
The sparse component of \algname{Supra} is \emph{static} throughout fine-tuning: once a weight is deemed non-trainable, it can never be activated.  Dynamic sparsity---allowing the mask to grow, prune, or re-allocate budget during training---has shown promise in other contexts and may close the gap to full fine-tuning in edge cases where the initial selection is sub-optimal.

\paragraph{System efficiency.}
The profiling results in \Cref{sec:efficiency_measurements} measure our current implementation, not an optimized sparse-kernel system. They use short fixed-step runs, so they are useful for comparing implementation-level memory, checkpoint size, calibration cost, and local throughput under the same setup, but they should not be interpreted as asymptotic runtime claims. Additional sparse-kernel engineering, sparse optimizer implementations, or different batching regimes could change the wall-clock profile.

\paragraph{Layer-local parameter budgets.}  
Our budget-matched rank rule (\Cref{eq:lora_adaptive_r}) applies the same low-rank budget fraction \(\lambda\) independently to every adapted linear layer. In practice, different layers contribute unequally to downstream task performance; early attention blocks, for example, often admit more aggressive compression than middle MLP layers. A global, data-driven re-allocation of the parameter budget---akin to neural architecture search---might improve efficiency further.

\begin{ack}
    The research reported in this publication was supported by funding from King Abdullah University of Science and Technology (KAUST): i) KAUST Baseline Research Scheme, ii) CRG Grant ORFS-CRG12-2024-6460, and iii) Center of Excellence for Generative AI, under award number 5940.
\end{ack}

\bibliographystyle{plainnat}
\bibliography{example_paper}

\newpage
\appendix


\section{Table of Frequently Used Notation} \label{sec:notation}

\begin{table}[H]
\caption{Notation table}
\label{tab:notation_table}
\centering
\small
\begin{tabularx}{\textwidth}{@{}>{\raggedright\arraybackslash}p{0.19\textwidth}@{\hspace{0.5em}}c@{\hspace{0.5em}}X@{}}
\toprule
    \(\mathcal{N}\) & -- & Pretrained language model. \\
    \(m\) & -- & Number of candidate fully connected matrices considered for adaptation. \\
    \(\mathcal{W}=\{W^{(i)}\}_{i=1}^m\) & -- & Candidate fully connected weight matrices considered for adaptation. \\
    \(W^{(i)}\) & -- & Weight matrix \(i\), \(W^{(i)} \in \R^{c_i \times b_i}\), where \(c_i\) and \(b_i\) are output and input dimensions. \\
    \(\bar{w}\), \(\bar{\delta}\), \(\bar d\) & -- & Remaining model parameters \(\bar w\in\R^{\bar d}\), their optional perturbation \(\bar\delta\in\R^{\bar d}\), and their dimension. \\
    \(\mathcal{D}\), \(\mathcal{L}\) & -- & Fine-tuning dataset and training loss. \\
    \(\mathcal{D}_{\mathrm{cal}}\) & -- & Tokenized calibration set used to collect activations for sparse-mask construction. \\
    \(\mathcal A\) & -- & Adapted-layer index set, \(\mathcal A\subseteq\{1,\ldots,m\}\). \\
    \(\Delta\), \(\Delta^{(i)}\) & -- & Collection of additive perturbations and the perturbation applied to \(W^{(i)}\). \\
    \(\mathcal{C}(\Delta,\bar{\delta})\) & -- & Constraint set defining the adapter structure, such as low-rank or sparse updates. \\
    \(\widehat W^{(i)}\) & -- & Adapted weight matrix after adding sparse and/or low-rank updates. \\
    \(X\), \(X^{(i)}\) & -- & Calibration activations for a generic linear layer or layer \(i\); \(X \in \R^{b \times a}\), where \(a\) is the number of calibration token positions. \\
    \(X_{j:}\), \(X_{q:}\), \(X_{r:}\) & -- & Rows of \(X\); \(j,q,r\) denote input-coordinate indices. \\
    \(W_{ij}\), \(W_{kq}\), \(W^{(i)}_{kq}\) & -- & Entries of a generic weight matrix or of layer \(i\); \(k,q\) denote row and column indices in the \algname{Wanda} score. \\
    \(G_{ij}\) & -- & Gradient-magnitude score for sparse selection, \(G_{ij}=|\partial \mathcal L/\partial W_{ij}|\). \\
    \(B_{ij}\) & -- & PaFi-style pretrained-magnitude score, \(B_{ij}=|W_{ij}|\). \\
    \(A_{ij}\) & -- & Activation-weighted magnitude score used for mask selection, \(A_{ij}=|W_{ij}|\|X_{j:}\|_2\). \\
    \(S_{kq}\) & -- & \algname{Wanda} saliency score for entry \(W_{kq}\). \\
    \(\rho\) & -- & Sparse-support selection direction, \(\rho\in\{\mathrm{TopK},\mathrm{BottomK}\}\). \\
    \(\psi_X(W,s,\rho)\) & \(\coloneqq\) & Binary mask in \(\{0,1\}^{c\times b}\) selecting either the global top \(s\) or global bottom \(s\) entries of \(W\), according to \(\rho\), ordered by \(A_{ij}\). \\
    \(M^{(i)}\) & -- & Binary sparse mask for layer \(i\), \(M^{(i)}=\psi_{X^{(i)}}(W^{(i)},s_i,\rho)\). \\
    \(U^{(i)}\) & -- & Sparse trainable delta matrix used through \(M^{(i)}\odot U^{(i)}\). \\
    \(L^{(i)}, R^{(i)}\) & -- & \algname{LoRA} factors, \(L^{(i)} \in \R^{c_i \times r_i}\) and \(R^{(i)} \in \R^{r_i \times b_i}\). \\
    \(\gamma_i\) & -- & \algname{LoRA} scaling factor for layer \(i\). \\
    \(p\) & -- & Sparse density, \(p \in (0,1]\). \\
    \(T_i\) & -- & Parameter-count budget assigned to layer \(i\). \\
    \(T_i^{\mathrm{rank}}(r_0)\) & -- & \algname{LoRA}-rank-equivalent budget, \(r_0(c_i+b_i)\). \\
    \(T_i^{\mathrm{density}}(p)\) & -- & Density-equivalent budget, \(\lfloor p c_i b_i\rfloor\). \\
    \(r_0\) & -- & Reference \algname{LoRA} rank used to define the rank-equivalent budget. \\
    \(p_i(r_0)\) & -- & Layerwise density induced by the rank-equivalent budget, \(r_0(c_i+b_i)/(c_i b_i)\). \\
    \(\lambda\) & -- & Low-rank budget fraction for \algname{Supra}, \(\lambda \in [0,1]\). \\
    \(r_i\) & -- & Low-rank adapter rank for layer \(i\). \\
    \(l_i\) & -- & Number of low-rank adapter parameters in layer \(i\), \(l_i=r_i(c_i+b_i)\). \\
    \(s_i\) & -- & Number of sparse-adapter parameters in layer \(i\), \(s_i=T_i-l_i\). \\
    \(e_k\) & -- & \(k^{\text{th}}\) standard basis vector. \\
    \(\|\cdot\|_2\) & -- & Euclidean norm. \\
    \(\|\cdot\|_0\) & -- & Number of nonzero entries. \\
    \(\|\cdot\|_F\) & -- & Frobenius norm. \\
    \(\langle x,y\rangle\) & -- & Euclidean inner product of vectors \(x\) and \(y\). \\
    \(\odot\) & -- & Element-wise product of matrices with the same shape (Hadamard product). \\
    \(\lfloor x\rfloor\) & -- & Floor function, the greatest integer less than or equal to \(x\). \\
\bottomrule
\end{tabularx}
\end{table}

\clearpage

\section{\texorpdfstring{\algname{Wanda}}{Wanda} metric for generating fine-tuning mask}
\label{sec:wanda_appendix}

For fixed calibration activations \(X\in\R^{b\times a}\), where \(a\) is the number of calibration token positions, we define the mask-generation mapping
\[
    \psi_X:\R^{c\times b}\times\{0,\ldots,cb\}\times\{\mathrm{TopK},\mathrm{BottomK}\}
    \to \{0,1\}^{c\times b}.
\]
It takes a weight matrix \(W\in\R^{c\times b}\), a target number of sparse entries \(s\in\{0,\ldots,cb\}\), and a selection direction \(\rho\in\{\mathrm{TopK},\mathrm{BottomK}\}\). For each weight \(W_{ij}\), it evaluates the metric \(A_{ij}=|W_{ij}|\|X_{j:}\|_2\). The TopK variant selects the global top \(s\) entries in the layer, while the BottomK variant selects the global bottom \(s\) entries. Since squaring is monotone on nonnegative scores, the same ordering is obtained from the squared \algname{Wanda} score in \Cref{eq:wanda_metric}. The output is a binary mask with ones at these selected positions:
\begin{equation}
\label{eq:psi_def_appendix}
\psi_X(W,s,\rho)_{ij}
\coloneqq
\begin{cases}
1, & (i,j)\text{ is among the }s\text{ largest entries of }A,\quad \rho=\mathrm{TopK},\\
1, & (i,j)\text{ is among the }s\text{ smallest entries of }A,\quad \rho=\mathrm{BottomK},\\
0, & \text{otherwise.}
\end{cases}
\end{equation}

For example, let us have a weight matrix 
\[
W = 
\begin{pmatrix}
    3 & -2 \\
    -2 & 4 \\
    1 & -6 \\
\end{pmatrix},
\]
and an input matrix 
\[
X = 
\begin{pmatrix}
    4 & 3 \\
    0 & 1 \\
\end{pmatrix},
\] now we compute the metric $|W_{ij}|\|X_{j:}\|_2$ for every entry of $W$:
\[
\begin{pmatrix}
    |W_{11}|\|X_{1:}\|_2 & |W_{12}|\|X_{2:}\|_2 \\
    |W_{21}|\|X_{1:}\|_2 & |W_{22}|\|X_{2:}\|_2 \\
    |W_{31}|\|X_{1:}\|_2 & |W_{32}|\|X_{2:}\|_2 \\
\end{pmatrix} = 
\begin{pmatrix}
    3\cdot 5  & 2\cdot 1 \\
    2\cdot 5  & 4\cdot 1 \\
    1\cdot 5  & 6\cdot 1 \\
\end{pmatrix} = 
\begin{pmatrix}
    15  & 2 \\
    10  & 4 \\
    5  & 6 \\
\end{pmatrix},
\] then the TopK masks are
\[
\psi_X(W, 1,\mathrm{TopK}) = 
\begin{pmatrix}
    1  & 0 \\
    0  & 0 \\
    0  & 0 \\
\end{pmatrix}, \quad
\psi_X(W, 2,\mathrm{TopK}) = 
\begin{pmatrix}
    1  & 0 \\
    1  & 0 \\
    0  & 0 \\
\end{pmatrix}, \quad
\psi_X(W, 4,\mathrm{TopK}) = 
\begin{pmatrix}
    1  & 0 \\
    1  & 0 \\
    1  & 1 \\
\end{pmatrix}.
\]
For the same scores, the BottomK variant instead selects the lowest-scoring entries; for example,
\[
\psi_X(W, 2,\mathrm{BottomK}) =
\begin{pmatrix}
    0  & 1 \\
    0  & 1 \\
    0  & 0 \\
\end{pmatrix}.
\]

\clearpage

\section{Additional experiments}
\label{sec:additional_experiments}

\subsection{Hybrid TopK--BottomK \texorpdfstring{\algname{Super}}{Super}}
\label{sec:hybrid_super_appendix}

The hybrid \algname{Super} variant tests whether a sparse support should be selected entirely from one end of the \algname{Wanda}-score ordering, or whether mixing both ends can help. For a linear layer \(l\), let \(N_l\) be the number of weights in the layer, \(s_l\) be the same rank-equivalent sparse budget used by \algname{Super}, and let \(A^{(l)}_{ij}=|W^{(l)}_{ij}|\|X^{(l)}_{j:}\|_2\) be the C4-calibrated \algname{Wanda}-style score. The hybrid mask is
\[
    M^{(l)}_{\mathrm{hyb}}(\beta)
    =
    \operatorname{TopK}\!\left(A^{(l)}, k_{\mathrm{top}}\right)
    \cup
    \operatorname{BottomK}\!\left(A^{(l)}, k_{\mathrm{bottom}}\right),
\]
where
\[
    k_{\mathrm{top}}=\operatorname{round}(\beta s_l),
    \qquad
    k_{\mathrm{bottom}}=s_l-k_{\mathrm{top}},
    \qquad
    \beta\in[0,1].
\]
Thus \(\beta\) is the fraction of the sparse budget assigned to the largest \algname{Wanda} scores. The endpoints recover the original sparse-only \algname{Super} variants: \(\beta=1\) is \algname{Super} (TopK), while \(\beta=0\) is \algname{Super} (BottomK). Intermediate values mix high-score and low-score trainable weights while preserving the same number of trainable sparse parameters.
Equivalently, if \(p_l=s_l/N_l\) is the realized layerwise sparse rate, then the split is \(k_{\mathrm{top}}\approx N_l p_l\beta\) and \(k_{\mathrm{bottom}}\approx N_l p_l(1-\beta)\).
The results below show that this mixed support can sometimes improve over pure TopK, but it does not improve the schedule-selected sparse-only results; pure BottomK remains the strongest mask in the main comparisons.

\begin{table}[htbp]
\centering
\scriptsize
\caption{Hybrid TopK--BottomK sparse-support ablation. Model: \texttt{Llama-3.2-1B}; fine-tuning dataset: Math17K; training budget: one or three epochs as indicated; rank-equivalent budget: \(r_0=8\). The Base row is frozen. All trainable masks use C4 calibration and approximately \(5.6\)M trainable sparse parameters. Best average within each epoch block is bold. Results are exact-answer accuracy (\%).}
\label{tab:llama1b_math17k_hybrid_super_accuracy_r8}
\resizebox{\textwidth}{!}{%
\begin{tabular}{lcccccccccc}
\toprule
\textbf{Method} & \textbf{Epochs} & \(\boldsymbol{\beta}\) & \textbf{Selected LR} & \textbf{AddSub} & \textbf{MultiArith} & \textbf{SingleEq} & \textbf{GSM8K} & \textbf{AQuA} & \textbf{SVAMP} & \textbf{Average} \\
\midrule
\textsc{Base} (frozen) & -- & -- & -- & 13.67 & 4.67 & 21.46 & 2.81 & 21.65 & 11.60 & 12.64 \\
\midrule
\algname{Super} (TopK) & 1 & 1.0 & $5\cdot10^{-4}$ & 47.59 & 85.50 & 57.87 & 24.26 & 16.93 & 37.10 & 44.88 \\
\algname{Super} (Hybrid) & 1 & 0.8 & $5\cdot10^{-4}$ & 42.78 & 85.50 & 59.84 & 17.13 & 0.39 & 37.30 & 40.49 \\
\algname{Super} (Hybrid) & 1 & 0.5 & $5\cdot10^{-4}$ & 40.51 & 86.83 & 58.27 & 28.20 & 10.24 & 37.90 & 43.66 \\
\algname{Super} (Hybrid) & 1 & 0.3 & $5\cdot10^{-4}$ & 60.76 & 87.17 & 58.27 & 27.22 & 22.44 & 46.20 & \textbf{50.34} \\
\algname{Super} (BottomK) & 1 & 0.0 & $10^{-3}$ & 39.75 & 90.17 & 34.65 & 16.30 & 15.35 & 25.80 & 37.00 \\
\midrule
\algname{Super} (TopK) & 3 & 1.0 & $10^{-4}$ & 31.90 & 87.00 & 22.44 & 7.43 & 22.44 & 7.30 & 29.75 \\
\algname{Super} (Hybrid) & 3 & 0.8 & $10^{-4}$ & 36.96 & 84.83 & 27.36 & 7.81 & 19.69 & 8.60 & 30.88 \\
\algname{Super} (Hybrid) & 3 & 0.5 & $10^{-4}$ & 58.73 & 85.83 & 57.48 & 20.92 & 20.47 & 33.10 & 46.09 \\
\algname{Super} (Hybrid) & 3 & 0.3 & $10^{-4}$ & 71.14 & 81.33 & 71.85 & 22.29 & 25.20 & 45.10 & 52.82 \\
\algname{Super} (BottomK) & 3 & 0.0 & $5\cdot10^{-4}$ & 69.11 & 85.50 & 70.47 & 34.50 & 25.59 & 47.60 & \textbf{55.46} \\
\bottomrule
\end{tabular}
}
\end{table}

\begin{table}[htbp]
\centering
\scriptsize
\caption{Hybrid TopK--BottomK sparse-support ablation. Model: \texttt{Meta-Llama-3-8B}; fine-tuning dataset: Math17K; training budget: three epochs; rank-equivalent budget: \(r_0=8\). The Base row is frozen. All trainable masks use C4 calibration and approximately \(21.0\)M trainable sparse parameters. Results are exact-answer accuracy (\%).}
\label{tab:llama8b_math17k_hybrid_super_accuracy_r8}
\resizebox{\textwidth}{!}{%
\begin{tabular}{lcccccccccc}
\toprule
\textbf{Method} & \textbf{Epochs} & \(\boldsymbol{\beta}\) & \textbf{Selected LR} & \textbf{AddSub} & \textbf{MultiArith} & \textbf{SingleEq} & \textbf{GSM8K} & \textbf{AQuA} & \textbf{SVAMP} & \textbf{Average} \\
\midrule
\textsc{Base} (frozen) & -- & -- & -- & 22.53 & 21.00 & 36.42 & 10.24 & 24.02 & 24.10 & 23.05 \\
\midrule
\algname{Super} (TopK) & 3 & 1.0 & $5\cdot10^{-5}$ & 4.81 & 88.17 & 13.19 & 9.48 & 35.04 & 5.90 & 26.10 \\
\algname{Super} (Hybrid) & 3 & 0.8 & $5\cdot10^{-5}$ & 4.05 & 97.00 & 21.65 & 15.16 & 40.16 & 11.50 & 31.59 \\
\algname{Super} (Hybrid) & 3 & 0.5 & $5\cdot10^{-5}$ & 0.51 & 98.17 & 21.85 & 36.24 & 36.61 & 13.90 & 34.55 \\
\algname{Super} (Hybrid) & 3 & 0.3 & $5\cdot10^{-5}$ & 15.19 & 98.67 & 60.43 & 51.25 & 42.52 & 43.20 & 51.88 \\
\algname{Super} (BottomK) & 3 & 0.0 & $10^{-4}$ & 72.15 & 95.67 & 88.58 & 66.34 & 42.91 & 70.70 & \textbf{72.73} \\
\bottomrule
\end{tabular}
}
\end{table}

\clearpage

\subsection{\texttt{Llama-3.2-1B} results}
\label{sec:llama1b_results_appendix}

\subsubsection{Math7K, one-epoch comparison}
\label{sec:llama1b_math7k_appendix}

\Cref{tab:llama1b_math7k_1epoch_accuracy_r8,tab:llama1b_math7k_1epoch_ppl_r8} report the \texttt{Llama-3.2-1B} Math7K one-epoch comparison. This run is kept in the appendix and is not used for the main-text claims. The \textbf{Calib.} column gives the calibration source used to construct fixed \algname{Wanda}-style sparse supports; ``--'' denotes methods that do not use a calibration pass. Full fine-tuning is included as an unbudgeted reference row separated from the matched-budget adapter rows.

\begin{table}[htbp]
\centering
\scriptsize
\caption{Arithmetic benchmark accuracy. Model: \texttt{Llama-3.2-1B}; fine-tuning dataset: Math7K; training budget: one epoch; rank-equivalent budget: \(r_0=8\). Trainable adapter rows use approximately \(5.6\)M trainable parameters; the Base row is frozen. Full fine-tuning is an unbudgeted reference row separated by rules. Bold marks the best observed matched-budget adapter average. Each row uses the learning rate selected by the held-out validation split of the fine-tuning set. Results are exact-answer accuracy (\%).}
\label{tab:llama1b_math7k_1epoch_accuracy_r8}
\resizebox{\textwidth}{!}{%
\begin{tabular}{llcccccccc}
\toprule
\textbf{Method} & \textbf{Calib.} & \textbf{Selected LR} & \textbf{AddSub} & \textbf{MultiArith} & \textbf{SingleEq} & \textbf{GSM8K} & \textbf{AQuA} & \textbf{SVAMP} & \textbf{Average} \\
\midrule
\textsc{Base} (frozen) & -- & -- & 13.67 & 4.67 & 21.46 & 2.81 & 21.65 & 11.60 & 12.64 \\
\midrule
Full fine-tuning & -- & $5\cdot10^{-5}$ & 36.71 & 67.50 & 49.80 & 25.78 & 21.65 & 35.20 & 39.44 \\
\midrule
\algname{LoRA} & -- & $10^{-3}$ & 36.96 & 64.67 & 50.00 & 21.23 & 21.26 & 34.20 & 38.05 \\
\algname{RoSA} & -- & $5\cdot10^{-4}$ & 39.75 & 65.50 & 51.57 & 22.90 & 22.44 & 31.80 & 38.99 \\
\algname{SIFT} (TopK) & -- & $10^{-4}$ & 29.87 & 55.00 & 38.39 & 16.22 & 20.47 & 22.70 & 30.44 \\
\algname{SIFT} (RandK) & -- & $10^{-3}$ & 36.20 & 67.33 & 50.59 & 20.85 & 19.69 & 29.70 & 37.39 \\
\algname{Super} (RandK) & -- & $10^{-3}$ & 38.73 & 62.83 & 48.03 & 21.68 & 20.47 & 30.60 & 37.06 \\
\algname{Super} (TopK) & C4 & $5\cdot10^{-4}$ & 35.95 & 63.83 & 48.62 & 21.46 & 21.65 & 29.00 & 36.75 \\
\algname{Super} (BottomK) & C4 & $10^{-3}$ & 40.76 & 59.33 & 52.56 & 21.76 & 17.32 & 32.30 & 37.34 \\
\algname{Supra} (TopK, $\lambda=0.3$) & C4 & $5\cdot10^{-4}$ & 38.73 & 62.50 & 53.35 & 21.30 & 23.23 & 32.40 & 38.59 \\
\algname{Supra} (TopK, $\lambda=0.5$) & C4 & $5\cdot10^{-4}$ & 38.48 & 64.67 & 51.77 & 23.12 & 21.26 & 31.80 & 38.52 \\
\algname{Supra} (TopK, $\lambda=0.8$) & C4 & $5\cdot10^{-4}$ & 41.01 & 65.83 & 52.76 & 22.44 & 21.65 & 34.30 & \textbf{39.67} \\
\algname{Supra} (BottomK, $\lambda=0.3$) & C4 & $10^{-3}$ & 34.18 & 62.50 & 46.46 & 23.96 & 21.26 & 33.00 & 36.89 \\
\algname{Supra} (BottomK, $\lambda=0.5$) & C4 & $10^{-3}$ & 38.23 & 62.67 & 49.21 & 23.05 & 24.02 & 33.20 & 38.40 \\
\algname{Supra} (BottomK, $\lambda=0.8$) & C4 & $10^{-3}$ & 33.16 & 64.83 & 51.38 & 22.52 & 22.83 & 34.00 & 38.12 \\
\algname{Super} (TopK) & Math7K & $5\cdot10^{-4}$ & 38.23 & 61.83 & 48.62 & 22.06 & 20.08 & 29.60 & 36.74 \\
\algname{Super} (BottomK) & Math7K & $10^{-3}$ & 41.01 & 57.83 & 51.38 & 20.17 & 20.08 & 32.20 & 37.11 \\
\bottomrule
\end{tabular}
}
\end{table}

\begin{table}[htbp]
\centering
\scriptsize
\caption{Test perplexity. Model: \texttt{Llama-3.2-1B}; fine-tuning dataset: Math7K; training budget: one epoch; rank-equivalent budget: \(r_0=8\). These are the same validation-selected checkpoints as \Cref{tab:llama1b_math7k_1epoch_accuracy_r8}. Full fine-tuning is an unbudgeted reference row separated by rules. Bold marks the lowest matched-budget adapter average PPL. The \textbf{Calib.} column gives the calibration source used to construct fixed \algname{Wanda}-style sparse supports; ``--'' denotes methods that do not use a calibration pass. Lower is better. Perplexities are computed on up to 120 examples per benchmark.}
\label{tab:llama1b_math7k_1epoch_ppl_r8}
\resizebox{\textwidth}{!}{%
\begin{tabular}{llcccccccc}
\toprule
\textbf{Method} & \textbf{Calib.} & \textbf{Selected LR} & \textbf{AddSub} & \textbf{MultiArith} & \textbf{SingleEq} & \textbf{GSM8K} & \textbf{AQuA} & \textbf{SVAMP} & \textbf{Average} \\
\midrule
\textsc{Base} (frozen) & -- & -- & 2.71 & 2.81 & 2.74 & 2.45 & 3.21 & 2.94 & 2.73 \\
\midrule
Full fine-tuning & -- & $5\cdot10^{-5}$ & 2.56 & 2.70 & 2.72 & 2.39 & 3.63 & 2.93 & 2.68 \\
\midrule
\algname{LoRA} & -- & $10^{-3}$ & 2.76 & 2.92 & 2.95 & 2.53 & 3.71 & 3.18 & 2.88 \\
\algname{RoSA} & -- & $5\cdot10^{-4}$ & 2.56 & 2.77 & 2.78 & 2.46 & 3.56 & 2.99 & 2.74 \\
\algname{SIFT} (TopK) & -- & $10^{-4}$ & 2.49 & 2.62 & 2.65 & 2.37 & 3.45 & 2.87 & 2.63 \\
\algname{SIFT} (RandK) & -- & $10^{-3}$ & 2.48 & 2.60 & 2.64 & 2.34 & 3.34 & 2.83 & 2.60 \\
\algname{Super} (RandK) & -- & $10^{-3}$ & 2.53 & 2.61 & 2.65 & 2.35 & 3.46 & 2.85 & 2.62 \\
\algname{Super} (TopK) & C4 & $5\cdot10^{-4}$ & 2.52 & 2.64 & 2.67 & 2.34 & 3.45 & 2.88 & 2.63 \\
\algname{Super} (BottomK) & C4 & $10^{-3}$ & 2.34 & 2.49 & 2.47 & 2.25 & 3.23 & 2.66 & 2.47 \\
\algname{Supra} (TopK, $\lambda=0.3$) & C4 & $5\cdot10^{-4}$ & 2.67 & 2.81 & 2.84 & 2.46 & 3.65 & 3.05 & 2.78 \\
\algname{Supra} (TopK, $\lambda=0.5$) & C4 & $5\cdot10^{-4}$ & 2.68 & 2.77 & 2.87 & 2.46 & 3.65 & 3.07 & 2.78 \\
\algname{Supra} (TopK, $\lambda=0.8$) & C4 & $5\cdot10^{-4}$ & 2.72 & 2.86 & 2.90 & 2.48 & 3.64 & 3.10 & 2.82 \\
\algname{Supra} (BottomK, $\lambda=0.3$) & C4 & $10^{-3}$ & 2.81 & 2.95 & 3.01 & 2.56 & 3.81 & 3.20 & 2.92 \\
\algname{Supra} (BottomK, $\lambda=0.5$) & C4 & $10^{-3}$ & 2.77 & 2.88 & 3.00 & 2.55 & 3.70 & 3.21 & 2.89 \\
\algname{Supra} (BottomK, $\lambda=0.8$) & C4 & $10^{-3}$ & 2.81 & 2.89 & 3.05 & 2.56 & 3.67 & 3.25 & 2.91 \\
\algname{Super} (TopK) & Math7K & $5\cdot10^{-4}$ & 2.49 & 2.63 & 2.62 & 2.34 & 3.49 & 2.81 & 2.60 \\
\algname{Super} (BottomK) & Math7K & $10^{-3}$ & 2.32 & 2.48 & 2.44 & 2.23 & 3.22 & 2.64 & \textbf{2.45} \\
\bottomrule
\end{tabular}
}
\end{table}

\clearpage

\subsubsection{Math17K, one-epoch comparison}
\label{sec:llama1b_math17k_appendix}
\label{sec:llama1b_math17k_1epoch_appendix}

\Cref{tab:llama1b_math17k_1epoch_accuracy_r8,tab:llama1b_math17k_1epoch_ppl_r8} report the \texttt{Llama-3.2-1B} Math17K comparison for one epoch. These rows are used together with the three-epoch rows to form the schedule-selected compact table in the main text. Full fine-tuning is included as an unbudgeted reference row separated from the matched-budget adapter rows. The separate hybrid TopK--BottomK sparse-support ablation is reported in \Cref{sec:hybrid_super_appendix}.

\begin{table}[htbp]
\centering
\scriptsize
\caption{Arithmetic benchmark accuracy. Model: \texttt{Llama-3.2-1B}; fine-tuning dataset: Math17K; training budget: one epoch; rank-equivalent budget: \(r_0=8\). The Base row is frozen. Adapter rows use approximately \(5.6\)M trainable parameters. Full fine-tuning is an unbudgeted reference row separated by rules. Bold marks the best observed matched-budget adapter average. Each row uses the learning rate selected by the held-out validation split of the fine-tuning set. Results are exact-answer accuracy (\%).}
\label{tab:llama1b_math17k_1epoch_accuracy_r8}
\resizebox{\textwidth}{!}{%
\begin{tabular}{llcccccccc}
\toprule
\textbf{Method} & \textbf{Calib.} & \textbf{Selected LR} & \textbf{AddSub} & \textbf{MultiArith} & \textbf{SingleEq} & \textbf{GSM8K} & \textbf{AQuA} & \textbf{SVAMP} & \textbf{Average} \\
\midrule
\textsc{Base} (frozen) & -- & -- & 13.67 & 4.67 & 21.46 & 2.81 & 21.65 & 11.60 & 12.64 \\
\midrule
Full fine-tuning & -- & $5\cdot10^{-5}$ & 70.38 & 91.83 & 77.76 & 36.54 & 24.02 & 58.10 & 59.77 \\
\midrule
\algname{LoRA} & -- & $10^{-3}$ & 73.67 & 85.83 & 77.56 & 30.10 & 24.41 & 53.80 & 57.56 \\
\algname{RoSA} & -- & $5\cdot10^{-4}$ & 71.65 & 89.17 & 78.35 & 27.98 & 24.80 & 53.70 & 57.61 \\
\algname{SIFT} (TopK) & -- & $10^{-4}$ & 60.51 & 83.17 & 63.19 & 23.43 & 3.94 & 40.30 & 45.75 \\
\algname{SIFT} (RandK) & -- & $10^{-3}$ & 62.28 & 90.83 & 73.62 & 31.46 & 17.72 & 48.70 & 54.10 \\
\algname{Super} (TopK) & C4 & $5\cdot10^{-4}$ & 47.59 & 85.50 & 57.87 & 24.26 & 16.93 & 37.10 & 44.88 \\
\algname{Super} (BottomK) & C4 & $10^{-3}$ & 39.75 & 90.17 & 34.65 & 16.30 & 15.35 & 25.80 & 37.00 \\
\algname{Magnitude} (TopK) & -- & $10^{-3}$ & 77.47 & 1.50 & 80.71 & 27.75 & 20.87 & 52.10 & 43.40 \\
\algname{Magnitude} (BottomK) & -- & $10^{-3}$ & 73.92 & 90.50 & 81.69 & 30.25 & 24.80 & 54.70 & 59.31 \\
\algname{Supra} (TopK, $\lambda=0.3$) & C4 & $5\cdot10^{-4}$ & 69.87 & 90.33 & 78.74 & 26.69 & 24.80 & 52.00 & 57.07 \\
\algname{Supra} (TopK, $\lambda=0.5$) & C4 & $5\cdot10^{-4}$ & 61.77 & 86.33 & 54.53 & 24.72 & 25.98 & 35.30 & 48.11 \\
\algname{Supra} (TopK, $\lambda=0.8$) & C4 & $5\cdot10^{-4}$ & 78.23 & 92.17 & 80.71 & 23.65 & 25.98 & 57.40 & \textbf{59.69} \\
\algname{Supra} (BottomK, $\lambda=0.3$) & C4 & $10^{-3}$ & 59.24 & 87.33 & 57.48 & 23.12 & 2.36 & 31.50 & 43.51 \\
\algname{Supra} (BottomK, $\lambda=0.5$) & C4 & $10^{-3}$ & 76.20 & 79.67 & 79.92 & 28.35 & 24.02 & 59.90 & 58.01 \\
\algname{Supra} (BottomK, $\lambda=0.8$) & C4 & $10^{-3}$ & 74.18 & 89.33 & 79.53 & 29.95 & 26.77 & 53.00 & 58.79 \\
\algname{Supra-Mag} (BottomK, $\lambda=0.3$) & -- & $5\cdot10^{-4}$ & 76.71 & 91.67 & 81.10 & 22.97 & 27.56 & 55.10 & 59.18 \\
\algname{Supra-Mag} (BottomK, $\lambda=0.5$) & -- & $10^{-3}$ & 66.84 & 88.17 & 78.15 & 30.02 & 22.05 & 49.50 & 55.79 \\
\algname{Supra-Mag} (BottomK, $\lambda=0.8$) & -- & $10^{-3}$ & 73.16 & 88.83 & 75.79 & 33.13 & 24.02 & 52.00 & 57.82 \\
\bottomrule
\end{tabular}
}
\end{table}

\begin{table}[htbp]
\centering
\scriptsize
\caption{Test perplexity. Model: \texttt{Llama-3.2-1B}; fine-tuning dataset: Math17K; training budget: one epoch; rank-equivalent budget: \(r_0=8\). These are the same validation-selected checkpoints as \Cref{tab:llama1b_math17k_1epoch_accuracy_r8}. The Base row is frozen. Full fine-tuning is an unbudgeted reference row separated by rules. Bold marks the lowest matched-budget adapter average PPL. The \textbf{Calib.} column gives the calibration source used to construct fixed \algname{Wanda}-style sparse supports; ``--'' denotes methods that do not use a calibration pass. Lower is better. Perplexities are computed on up to 120 examples per benchmark.}
\label{tab:llama1b_math17k_1epoch_ppl_r8}
\resizebox{\textwidth}{!}{%
\begin{tabular}{llcccccccc}
\toprule
\textbf{Method} & \textbf{Calib.} & \textbf{Selected LR} & \textbf{AddSub} & \textbf{MultiArith} & \textbf{SingleEq} & \textbf{GSM8K} & \textbf{AQuA} & \textbf{SVAMP} & \textbf{Average} \\
\midrule
\textsc{Base} (frozen) & -- & -- & 2.71 & 2.81 & 2.74 & 2.45 & 3.21 & 2.94 & 2.73 \\
\midrule
Full fine-tuning & -- & $5\cdot10^{-5}$ & 1.10 & 1.12 & 1.14 & 1.28 & 1.59 & 1.19 & 1.21 \\
\midrule
\algname{LoRA} & -- & $10^{-3}$ & 1.12 & 1.14 & 1.16 & 1.32 & 1.66 & 1.22 & 1.24 \\
\algname{RoSA} & -- & $5\cdot10^{-4}$ & 1.13 & 1.16 & 1.18 & 1.35 & 1.73 & 1.24 & 1.27 \\
\algname{SIFT} (TopK) & -- & $10^{-4}$ & 1.18 & 1.23 & 1.23 & 1.40 & 1.83 & 1.31 & 1.33 \\
\algname{SIFT} (RandK) & -- & $10^{-3}$ & 1.12 & 1.15 & 1.17 & 1.32 & 1.68 & 1.23 & 1.25 \\
\algname{Super} (TopK) & C4 & $5\cdot10^{-4}$ & 1.13 & 1.17 & 1.18 & 1.34 & 1.73 & 1.25 & 1.27 \\
\algname{Super} (BottomK) & C4 & $10^{-3}$ & 1.12 & 1.15 & 1.17 & 1.32 & 1.80 & 1.23 & 1.26 \\
\algname{Magnitude} (TopK) & -- & $10^{-3}$ & 1.12 & 1.16 & 1.17 & 1.33 & 1.73 & 1.23 & 1.26 \\
\algname{Magnitude} (BottomK) & -- & $10^{-3}$ & 1.12 & 1.14 & 1.16 & 1.31 & 1.64 & 1.22 & 1.24 \\
\algname{Supra} (TopK, $\lambda=0.3$) & C4 & $5\cdot10^{-4}$ & 1.13 & 1.15 & 1.17 & 1.33 & 1.71 & 1.23 & 1.26 \\
\algname{Supra} (TopK, $\lambda=0.5$) & C4 & $5\cdot10^{-4}$ & 1.12 & 1.15 & 1.17 & 1.33 & 1.69 & 1.23 & 1.25 \\
\algname{Supra} (TopK, $\lambda=0.8$) & C4 & $5\cdot10^{-4}$ & 1.13 & 1.16 & 1.17 & 1.34 & 1.70 & 1.24 & 1.26 \\
\algname{Supra} (BottomK, $\lambda=0.3$) & C4 & $10^{-3}$ & 1.12 & 1.14 & 1.16 & 1.32 & 1.65 & 1.22 & 1.24 \\
\algname{Supra} (BottomK, $\lambda=0.5$) & C4 & $10^{-3}$ & 1.13 & 1.13 & 1.16 & 1.31 & 1.64 & 1.22 & 1.24 \\
\algname{Supra} (BottomK, $\lambda=0.8$) & C4 & $10^{-3}$ & 1.11 & 1.13 & 1.16 & 1.31 & 1.65 & 1.21 & \textbf{1.24} \\
\algname{Supra-Mag} (BottomK, $\lambda=0.3$) & -- & $5\cdot10^{-4}$ & 1.12 & 1.15 & 1.17 & 1.32 & 1.67 & 1.23 & 1.25 \\
\algname{Supra-Mag} (BottomK, $\lambda=0.5$) & -- & $10^{-3}$ & 1.12 & 1.13 & 1.16 & 1.32 & 1.65 & 1.22 & 1.24 \\
\algname{Supra-Mag} (BottomK, $\lambda=0.8$) & -- & $10^{-3}$ & 1.12 & 1.13 & 1.15 & 1.31 & 1.65 & 1.21 & 1.24 \\
\bottomrule
\end{tabular}
}
\end{table}

\clearpage

\subsubsection{Math17K, three-epoch comparison}
\label{sec:llama1b_math17k_3epoch_appendix}

\Cref{tab:llama1b_math17k_3epoch_accuracy_r8,tab:llama1b_math17k_3epoch_ppl_r8} report the \texttt{Llama-3.2-1B} Math17K comparison for three epochs. These runs use the same rank-equivalent adapter budget and evaluation suite as the one-epoch comparison, with full fine-tuning included as an unbudgeted reference row.

\begin{figure}[htbp]
\centering
\begin{minipage}{0.49\textwidth}
\centering
\includegraphics[width=\linewidth]{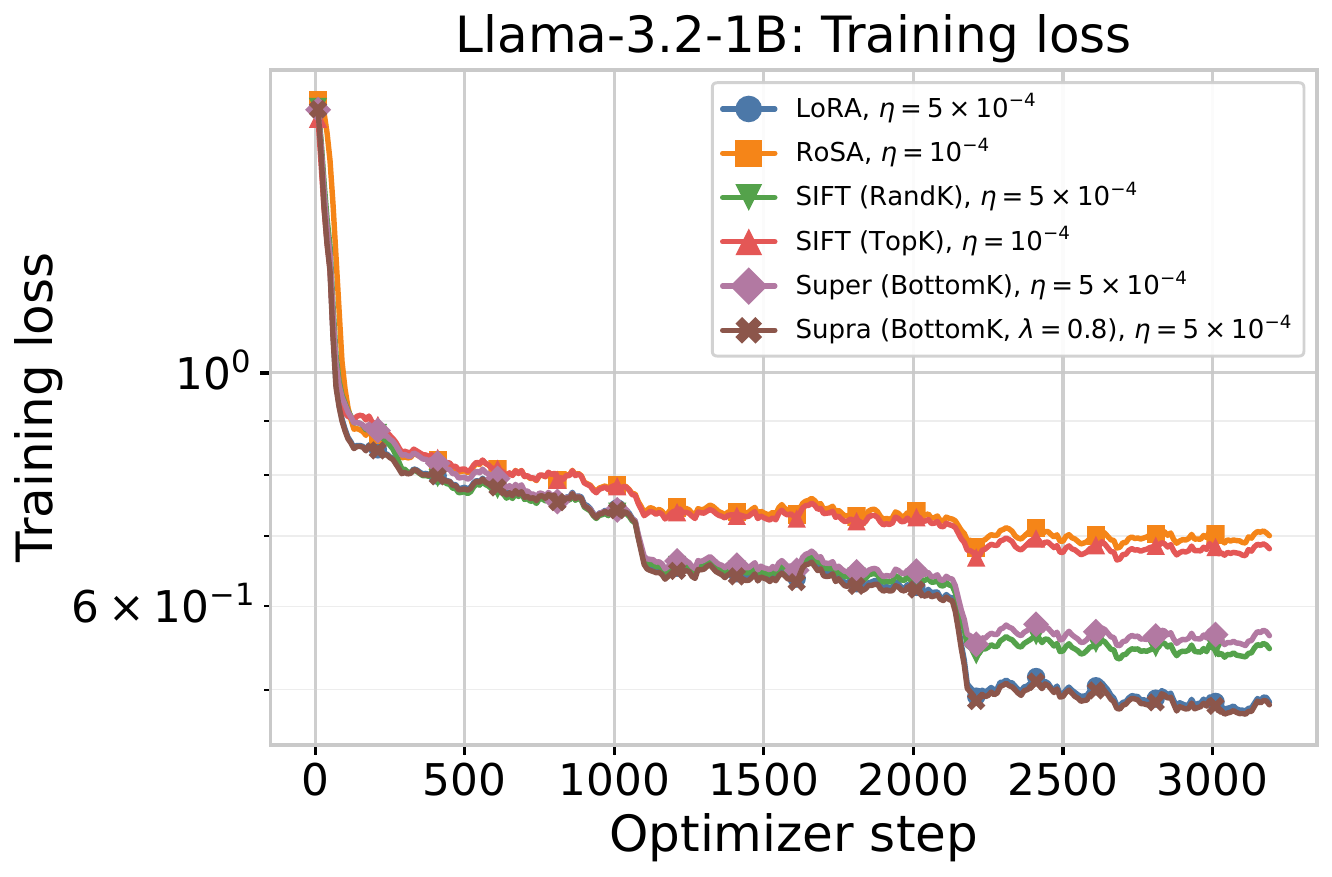}
\end{minipage}
\hfill
\begin{minipage}{0.49\textwidth}
\centering
\includegraphics[width=\linewidth]{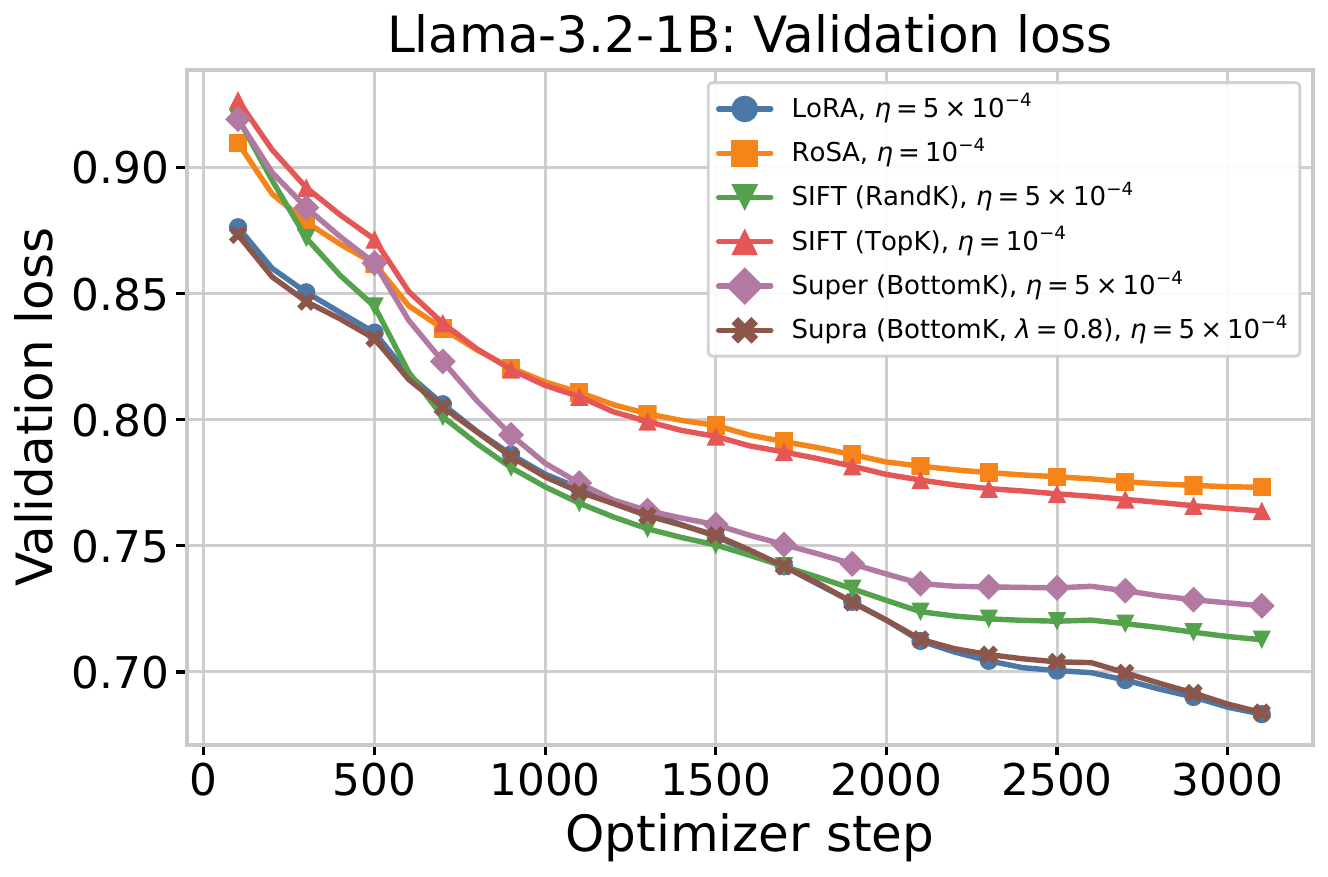}
\end{minipage}
\caption{Optimization curves for \texttt{Llama-3.2-1B} Math17K fine-tuning over three epochs. Left: training loss, plotted with a log-scaled \(y\)-axis. Right: held-out Math17K validation loss. Each curve uses the validation-selected learning rate for the corresponding method under the rank-equivalent \(r_0=8\) budget.}
\label{fig:llama1b_math17k_optimization_curves_appendix}
\end{figure}

\begin{table}[htbp]
\centering
\scriptsize
\caption{Arithmetic benchmark accuracy. Model: \texttt{Llama-3.2-1B}; fine-tuning dataset: Math17K; training budget: three epochs; rank-equivalent budget: \(r_0=8\). The Base row is frozen. Adapter rows use approximately \(5.6\)M trainable parameters. Full fine-tuning is an unbudgeted reference row separated by rules. Bold marks the best observed matched-budget adapter average. Each row uses the learning rate selected by the held-out validation split of the fine-tuning set. Results are exact-answer accuracy (\%).}
\label{tab:llama1b_math17k_3epoch_accuracy_r8}
\resizebox{\textwidth}{!}{%
\begin{tabular}{llcccccccc}
\toprule
\textbf{Method} & \textbf{Calib.} & \textbf{Selected LR} & \textbf{AddSub} & \textbf{MultiArith} & \textbf{SingleEq} & \textbf{GSM8K} & \textbf{AQuA} & \textbf{SVAMP} & \textbf{Average} \\
\midrule
\textsc{Base} (frozen) & -- & -- & 13.67 & 4.67 & 21.46 & 2.81 & 21.65 & 11.60 & 12.64 \\
\midrule
Full fine-tuning & -- & $5\cdot10^{-5}$ & 80.00 & 91.67 & 85.63 & 44.05 & 28.74 & 64.10 & 65.70 \\
\midrule
\algname{LoRA} & -- & $5\cdot10^{-4}$ & 70.63 & 90.83 & 85.24 & 35.33 & 26.77 & 57.60 & 61.07 \\
\algname{RoSA} & -- & $10^{-4}$ & 59.75 & 89.00 & 71.26 & 27.82 & 21.26 & 46.60 & 52.62 \\
\algname{SIFT} (TopK) & -- & $10^{-4}$ & 30.89 & 64.17 & 16.73 & 7.51 & 24.80 & 7.40 & 25.25 \\
\algname{SIFT} (RandK) & -- & $5\cdot10^{-4}$ & 52.15 & 81.67 & 72.83 & 29.72 & 25.98 & 53.70 & 52.68 \\
\algname{Super} (TopK) & C4 & $10^{-4}$ & 31.90 & 87.00 & 22.44 & 7.43 & 22.44 & 7.30 & 29.75 \\
\algname{Super} (BottomK) & C4 & $5\cdot10^{-4}$ & 69.11 & 85.50 & 70.47 & 34.50 & 25.59 & 47.60 & 55.46 \\
\algname{Magnitude} (TopK) & -- & $5\cdot10^{-4}$ & 72.91 & 84.83 & 73.03 & 33.43 & 23.23 & 45.60 & 55.51 \\
\algname{Magnitude} (BottomK) & -- & $5\cdot10^{-4}$ & 61.77 & 25.00 & 69.09 & 33.81 & 25.20 & 46.70 & 43.60 \\
\algname{Supra} (TopK, $\lambda=0.3$) & C4 & $10^{-4}$ & 45.06 & 90.83 & 72.05 & 32.60 & 24.80 & 47.70 & 52.17 \\
\algname{Supra} (TopK, $\lambda=0.5$) & C4 & $10^{-4}$ & 63.80 & 81.17 & 75.59 & 31.69 & 23.62 & 52.20 & 54.68 \\
\algname{Supra} (TopK, $\lambda=0.8$) & C4 & $10^{-4}$ & 64.56 & 76.00 & 72.05 & 30.10 & 24.02 & 47.00 & 52.29 \\
\algname{Supra} (BottomK, $\lambda=0.3$) & C4 & $10^{-4}$ & 60.25 & 89.83 & 72.05 & 31.77 & 25.59 & 50.70 & 55.03 \\
\algname{Supra} (BottomK, $\lambda=0.5$) & C4 & $10^{-4}$ & 61.27 & 92.17 & 74.02 & 30.25 & 20.47 & 54.20 & 55.40 \\
\algname{Supra} (BottomK, $\lambda=0.8$) & C4 & $5\cdot10^{-4}$ & 82.53 & 85.50 & 84.45 & 36.16 & 24.41 & 60.30 & \textbf{62.23} \\
\algname{Supra-Mag} (BottomK, $\lambda=0.3$) & -- & $10^{-4}$ & 60.00 & 89.33 & 68.70 & 33.74 & 25.98 & 44.50 & 53.71 \\
\algname{Supra-Mag} (BottomK, $\lambda=0.5$) & -- & $10^{-4}$ & 60.25 & 90.50 & 75.00 & 31.24 & 25.98 & 54.00 & 56.16 \\
\algname{Supra-Mag} (BottomK, $\lambda=0.8$) & -- & $5\cdot10^{-4}$ & 76.46 & 28.17 & 80.51 & 38.29 & 25.59 & 52.60 & 50.27 \\
\bottomrule
\end{tabular}
}
\end{table}

\begin{table}[htbp]
\centering
\scriptsize
\caption{Test perplexity. Model: \texttt{Llama-3.2-1B}; fine-tuning dataset: Math17K; training budget: three epochs; rank-equivalent budget: \(r_0=8\). These are the same validation-selected checkpoints as \Cref{tab:llama1b_math17k_3epoch_accuracy_r8}. The Base row is frozen. Full fine-tuning is an unbudgeted reference row separated by rules. Bold marks the lowest matched-budget adapter average PPL. The \textbf{Calib.} column gives the calibration source used to construct fixed \algname{Wanda}-style sparse supports; ``--'' denotes methods that do not use a calibration pass. Lower is better. Perplexities are computed on up to 120 examples per benchmark.}
\label{tab:llama1b_math17k_3epoch_ppl_r8}
\resizebox{\textwidth}{!}{%
\begin{tabular}{llcccccccc}
\toprule
\textbf{Method} & \textbf{Calib.} & \textbf{Selected LR} & \textbf{AddSub} & \textbf{MultiArith} & \textbf{SingleEq} & \textbf{GSM8K} & \textbf{AQuA} & \textbf{SVAMP} & \textbf{Average} \\
\midrule
\textsc{Base} (frozen) & -- & -- & 2.71 & 2.81 & 2.74 & 2.45 & 3.21 & 2.94 & 2.73 \\
\midrule
Full fine-tuning & -- & $5\cdot10^{-5}$ & 1.05 & 1.05 & 1.07 & 1.16 & 1.29 & 1.10 & 1.11 \\
\midrule
\algname{LoRA} & -- & $5\cdot10^{-4}$ & 1.07 & 1.07 & 1.11 & 1.23 & 1.45 & 1.15 & 1.17 \\
\algname{RoSA} & -- & $10^{-4}$ & 1.13 & 1.16 & 1.18 & 1.34 & 1.70 & 1.24 & 1.27 \\
\algname{SIFT} (TopK) & -- & $10^{-4}$ & 1.12 & 1.16 & 1.17 & 1.33 & 1.68 & 1.24 & 1.26 \\
\algname{SIFT} (RandK) & -- & $5\cdot10^{-4}$ & 1.08 & 1.10 & 1.12 & 1.26 & 1.56 & 1.18 & 1.19 \\
\algname{Super} (TopK) & C4 & $10^{-4}$ & 1.16 & 1.18 & 1.21 & 1.39 & 1.97 & 1.29 & 1.31 \\
\algname{Super} (BottomK) & C4 & $5\cdot10^{-4}$ & 1.09 & 1.10 & 1.13 & 1.27 & 1.72 & 1.18 & 1.21 \\
\algname{Magnitude} (TopK) & -- & $5\cdot10^{-4}$ & 1.09 & 1.10 & 1.13 & 1.27 & 1.72 & 1.18 & 1.21 \\
\algname{Magnitude} (BottomK) & -- & $5\cdot10^{-4}$ & 1.08 & 1.09 & 1.12 & 1.25 & 1.59 & 1.17 & 1.19 \\
\algname{Supra} (TopK, $\lambda=0.3$) & C4 & $10^{-4}$ & 1.11 & 1.13 & 1.15 & 1.31 & 1.76 & 1.21 & 1.24 \\
\algname{Supra} (TopK, $\lambda=0.5$) & C4 & $10^{-4}$ & 1.11 & 1.13 & 1.15 & 1.31 & 1.67 & 1.21 & 1.24 \\
\algname{Supra} (TopK, $\lambda=0.8$) & C4 & $10^{-4}$ & 1.11 & 1.13 & 1.16 & 1.32 & 1.71 & 1.22 & 1.24 \\
\algname{Supra} (BottomK, $\lambda=0.3$) & C4 & $10^{-4}$ & 1.13 & 1.14 & 1.18 & 1.34 & 1.71 & 1.24 & 1.26 \\
\algname{Supra} (BottomK, $\lambda=0.5$) & C4 & $10^{-4}$ & 1.12 & 1.15 & 1.17 & 1.34 & 1.81 & 1.24 & 1.26 \\
\algname{Supra} (BottomK, $\lambda=0.8$) & C4 & $5\cdot10^{-4}$ & 1.08 & 1.08 & 1.10 & 1.23 & 1.44 & 1.14 & \textbf{1.16} \\
\algname{Supra-Mag} (BottomK, $\lambda=0.3$) & -- & $10^{-4}$ & 1.14 & 1.15 & 1.18 & 1.34 & 1.79 & 1.25 & 1.27 \\
\algname{Supra-Mag} (BottomK, $\lambda=0.5$) & -- & $10^{-4}$ & 1.13 & 1.15 & 1.18 & 1.34 & 1.79 & 1.24 & 1.27 \\
\algname{Supra-Mag} (BottomK, $\lambda=0.8$) & -- & $5\cdot10^{-4}$ & 1.07 & 1.08 & 1.10 & 1.22 & 1.44 & 1.14 & 1.16 \\
\bottomrule
\end{tabular}
}
\end{table}

\subsection{\texttt{Meta-Llama-3-8B} results}
\label{sec:llama8b_results_appendix}

\subsubsection{Math7K, one-epoch comparison}
\label{sec:llama8b_math7k_appendix}

\Cref{tab:llama8b_math7k_1epoch_accuracy_r8,tab:llama8b_math7k_1epoch_ppl_r8} report the Math7K one-epoch comparison for \texttt{Meta-Llama-3-8B}. We only report the completed 8B Math7K adapter suite here; the larger Math17K 8B comparisons are reported separately in \Cref{sec:llama8b_math17k_1epoch_appendix,sec:llama8b_math17k_appendix}.

\begin{table}[htbp]
\centering
\scriptsize
\caption{Arithmetic benchmark accuracy. Model: \texttt{Meta-Llama-3-8B}; fine-tuning dataset: Math7K; training budget: one epoch; rank-equivalent budget: \(r_0=8\). Adapter rows use approximately the rank-8 \algname{LoRA} trainable-parameter budget for this model. Each row uses the learning rate selected by the held-out validation split of the fine-tuning set. Results are exact-answer accuracy (\%).}
\label{tab:llama8b_math7k_1epoch_accuracy_r8}
\resizebox{\textwidth}{!}{%
\begin{tabular}{llcccccccc}
\toprule
\textbf{Method} & \textbf{Calib.} & \textbf{Selected LR} & \textbf{AddSub} & \textbf{MultiArith} & \textbf{SingleEq} & \textbf{GSM8K} & \textbf{AQuA} & \textbf{SVAMP} & \textbf{Average} \\
\midrule
\textsc{Base} (frozen) & -- & -- & 22.53 & 21.00 & 36.42 & 10.24 & 24.02 & 24.10 & 23.05 \\
\algname{LoRA} & -- & $5\cdot10^{-4}$ & 83.29 & 91.17 & 92.72 & 65.35 & 25.98 & 75.80 & 72.39 \\
\algname{RoSA} & -- & $10^{-4}$ & 82.78 & 87.50 & 91.34 & 64.37 & 28.74 & 74.70 & 71.57 \\
\algname{SIFT} (TopK) & -- & $10^{-4}$ & 82.28 & 89.17 & 90.94 & 62.40 & 25.20 & 72.60 & 70.43 \\
\algname{SIFT} (RandK) & -- & $10^{-3}$ & 78.73 & 92.50 & 91.14 & 65.20 & 25.59 & 74.70 & 71.31 \\
\algname{Super} (RandK) & -- & $10^{-3}$ & 81.77 & 90.83 & 92.32 & 65.88 & 27.17 & 74.80 & 72.13 \\
\algname{Super} (TopK) & C4 & $10^{-4}$ & 84.81 & 90.00 & 88.78 & 62.17 & 26.38 & 75.80 & 71.32 \\
\algname{Super} (BottomK) & C4 & $10^{-3}$ & 81.77 & 89.67 & 92.13 & 65.50 & 27.56 & 74.00 & 71.77 \\
\algname{Supra} (TopK, $\lambda=0.3$) & C4 & $10^{-4}$ & 84.30 & 89.00 & 91.14 & 63.23 & 27.17 & 74.50 & 71.56 \\
\algname{Supra} (TopK, $\lambda=0.5$) & C4 & $10^{-4}$ & 83.80 & 89.83 & 90.75 & 62.70 & 30.31 & 73.40 & 71.80 \\
\algname{Supra} (TopK, $\lambda=0.8$) & C4 & $10^{-4}$ & 85.32 & 88.83 & 92.32 & 63.68 & 27.56 & 74.90 & 72.10 \\
\algname{Supra} (BottomK, $\lambda=0.3$) & C4 & $5\cdot10^{-4}$ & 81.77 & 91.17 & 91.54 & 66.94 & 28.74 & 75.50 & 72.61 \\
\algname{Supra} (BottomK, $\lambda=0.5$) & C4 & $5\cdot10^{-4}$ & 81.01 & 89.67 & 90.94 & 66.19 & 26.77 & 77.60 & 72.03 \\
\algname{Supra} (BottomK, $\lambda=0.8$) & C4 & $5\cdot10^{-4}$ & 83.04 & 91.83 & 91.34 & 66.03 & 26.77 & 77.20 & \textbf{72.70} \\
\bottomrule
\end{tabular}
}
\end{table}

\begin{table}[htbp]
\centering
\scriptsize
\caption{Test perplexity. Model: \texttt{Meta-Llama-3-8B}; fine-tuning dataset: Math7K; training budget: one epoch; rank-equivalent budget: \(r_0=8\). These are the same validation-selected checkpoints as \Cref{tab:llama8b_math7k_1epoch_accuracy_r8}. The \textbf{Calib.} column gives the calibration source used to construct fixed \algname{Wanda}-style sparse supports; ``--'' denotes methods that do not use a calibration pass. Lower is better. Perplexities are computed on up to 120 examples per benchmark.}
\label{tab:llama8b_math7k_1epoch_ppl_r8}
\resizebox{\textwidth}{!}{%
\begin{tabular}{llcccccccc}
\toprule
\textbf{Method} & \textbf{Calib.} & \textbf{Selected LR} & \textbf{AddSub} & \textbf{MultiArith} & \textbf{SingleEq} & \textbf{GSM8K} & \textbf{AQuA} & \textbf{SVAMP} & \textbf{Average} \\
\midrule
\textsc{Base} (frozen) & -- & -- & 2.15 & 2.28 & 2.30 & 1.98 & 2.64 & 2.41 & \textbf{2.22} \\
\algname{LoRA} & -- & $5\cdot10^{-4}$ & 2.34 & 2.50 & 2.53 & 2.15 & 2.97 & 2.64 & 2.43 \\
\algname{RoSA} & -- & $10^{-4}$ & 2.37 & 2.54 & 2.61 & 2.20 & 3.05 & 2.73 & 2.49 \\
\algname{SIFT} (TopK) & -- & $10^{-4}$ & 2.43 & 2.58 & 2.74 & 2.25 & 2.95 & 2.84 & 2.55 \\
\algname{SIFT} (RandK) & -- & $10^{-3}$ & 2.20 & 2.37 & 2.38 & 2.09 & 2.86 & 2.49 & 2.32 \\
\algname{Super} (RandK) & -- & $10^{-3}$ & 2.16 & 2.30 & 2.32 & 2.05 & 2.86 & 2.44 & 2.27 \\
\algname{Super} (TopK) & C4 & $10^{-4}$ & 2.21 & 2.38 & 2.44 & 2.08 & 2.86 & 2.55 & 2.33 \\
\algname{Super} (BottomK) & C4 & $10^{-3}$ & 2.38 & 2.51 & 2.58 & 2.19 & 3.00 & 2.69 & 2.47 \\
\algname{Supra} (TopK, $\lambda=0.3$) & C4 & $10^{-4}$ & 2.26 & 2.43 & 2.45 & 2.11 & 2.93 & 2.57 & 2.37 \\
\algname{Supra} (TopK, $\lambda=0.5$) & C4 & $10^{-4}$ & 2.28 & 2.45 & 2.49 & 2.12 & 2.91 & 2.60 & 2.39 \\
\algname{Supra} (TopK, $\lambda=0.8$) & C4 & $10^{-4}$ & 2.36 & 2.55 & 2.57 & 2.17 & 2.92 & 2.69 & 2.46 \\
\algname{Supra} (BottomK, $\lambda=0.3$) & C4 & $5\cdot10^{-4}$ & 2.18 & 2.37 & 2.35 & 2.07 & 2.91 & 2.49 & 2.31 \\
\algname{Supra} (BottomK, $\lambda=0.5$) & C4 & $5\cdot10^{-4}$ & 2.18 & 2.37 & 2.37 & 2.08 & 2.89 & 2.50 & 2.31 \\
\algname{Supra} (BottomK, $\lambda=0.8$) & C4 & $5\cdot10^{-4}$ & 2.26 & 2.48 & 2.43 & 2.11 & 2.89 & 2.55 & 2.37 \\
\bottomrule
\end{tabular}
}
\end{table}

\clearpage

\subsubsection{Math17K, one-epoch comparison}
\label{sec:llama8b_math17k_1epoch_appendix}

\Cref{tab:llama8b_math17k_1epoch_accuracy_r8,tab:llama8b_math17k_1epoch_ppl_r8} report the \texttt{Meta-Llama-3-8B} Math17K comparison for one epoch. The \algname{RoSA} row uses the corrected one-epoch \algname{RoSA} run; the remaining adapter rows come from the matching one-epoch sweep without hybrid TopK--BottomK sparse supports. Full fine-tuning is included as an unbudgeted reference row. The separate fixed-learning-rate sanity check for \algname{Super} (BottomK) is exported with the result artifacts, but is not included in the table because it was not selected by the validation-loss protocol.

\begin{table}[htbp]
\centering
\scriptsize
\caption{Arithmetic benchmark accuracy. Model: \texttt{Meta-Llama-3-8B}; fine-tuning dataset: Math17K; training budget: one epoch; rank-equivalent budget: \(r_0=8\). Adapter rows use approximately the rank-8 \algname{LoRA} trainable-parameter budget for this model. Full fine-tuning is an unbudgeted reference row separated by rules. Bold marks the best observed matched-budget adapter average. The \textbf{Calib.} column gives the calibration source used to construct fixed \algname{Wanda}-style sparse supports; ``--'' denotes methods that do not use a calibration pass. Each row uses the learning rate selected by the held-out validation split of the fine-tuning set. Results are exact-answer accuracy (\%).}
\label{tab:llama8b_math17k_1epoch_accuracy_r8}
\resizebox{\textwidth}{!}{%
\begin{tabular}{llcccccccc}
\toprule
\textbf{Method} & \textbf{Calib.} & \textbf{Selected LR} & \textbf{AddSub} & \textbf{MultiArith} & \textbf{SingleEq} & \textbf{GSM8K} & \textbf{AQuA} & \textbf{SVAMP} & \textbf{Average} \\
\midrule
\textsc{Base} (frozen) & -- & -- & 22.53 & 21.00 & 36.42 & 10.24 & 24.02 & 24.10 & 23.05 \\
\midrule
Full fine-tuning & -- & $5\cdot10^{-5}$ & 86.08 & 47.50 & 90.35 & 50.64 & 28.35 & 76.00 & 63.15 \\
\midrule
\algname{LoRA} & -- & $5\cdot10^{-4}$ & 86.33 & 97.67 & 93.70 & 66.34 & 24.41 & 70.60 & 73.17 \\
\algname{RoSA} & -- & $10^{-4}$ & 86.84 & 97.50 & 92.72 & 58.45 & 35.83 & 79.60 & 75.16 \\
\algname{SIFT} (TopK) & -- & $10^{-4}$ & 86.58 & 98.50 & 94.69 & 63.76 & 35.04 & 79.10 & 76.28 \\
\algname{SIFT} (RandK) & -- & $10^{-3}$ & 84.30 & 98.33 & 93.50 & 63.99 & 45.28 & 81.60 & 77.83 \\
\algname{Super} (RandK) & -- & $10^{-3}$ & 6.58 & 99.17 & 16.93 & 34.95 & 35.83 & 9.00 & 33.74 \\
\algname{Super} (TopK) & C4 & $5\cdot10^{-4}$ & 13.67 & 90.67 & 24.41 & 45.19 & 16.54 & 16.00 & 34.41 \\
\algname{Super} (BottomK) & C4 & $10^{-3}$ & 1.27 & 98.17 & 3.94 & 2.58 & 27.17 & 0.60 & 22.29 \\
\algname{Magnitude} (TopK) & -- & $10^{-3}$ & 54.68 & 98.00 & 61.42 & 57.85 & 37.40 & 35.10 & 57.41 \\
\algname{Magnitude} (BottomK) & -- & $10^{-3}$ & 90.89 & 97.33 & 95.08 & 68.01 & 41.73 & 81.10 & 79.02 \\
\algname{Supra} (TopK, $\lambda=0.3$) & C4 & $10^{-4}$ & 86.84 & 86.17 & 87.40 & 71.49 & 35.83 & 82.00 & 74.95 \\
\algname{Supra} (TopK, $\lambda=0.5$) & C4 & $10^{-4}$ & 83.04 & 98.00 & 86.02 & 70.20 & 37.80 & 80.20 & 75.88 \\
\algname{Supra} (TopK, $\lambda=0.8$) & C4 & $10^{-4}$ & 88.10 & 99.00 & 96.65 & 69.90 & 30.71 & 81.70 & 77.68 \\
\algname{Supra} (BottomK, $\lambda=0.3$) & C4 & $5\cdot10^{-4}$ & 89.11 & 95.33 & 96.06 & 67.63 & 41.34 & 82.50 & 78.66 \\
\algname{Supra} (BottomK, $\lambda=0.5$) & C4 & $5\cdot10^{-4}$ & 91.39 & 97.50 & 94.09 & 67.85 & 33.86 & 83.40 & 78.02 \\
\algname{Supra} (BottomK, $\lambda=0.8$) & C4 & $5\cdot10^{-4}$ & 90.89 & 96.67 & 96.26 & 67.17 & 36.22 & 82.50 & 78.28 \\
\algname{Supra-Mag} (BottomK, $\lambda=0.3$) & -- & $5\cdot10^{-4}$ & 92.91 & 98.50 & 96.85 & 69.83 & 35.43 & 81.20 & \textbf{79.12} \\
\algname{Supra-Mag} (BottomK, $\lambda=0.5$) & -- & $5\cdot10^{-4}$ & 91.39 & 96.83 & 93.90 & 62.40 & 17.72 & 79.30 & 73.59 \\
\algname{Supra-Mag} (BottomK, $\lambda=0.8$) & -- & $5\cdot10^{-4}$ & 87.34 & 99.33 & 87.60 & 64.06 & 29.53 & 77.80 & 74.28 \\
\bottomrule
\end{tabular}
}
\end{table}

\begin{table}[htbp]
\centering
\scriptsize
\caption{Test perplexity. Model: \texttt{Meta-Llama-3-8B}; fine-tuning dataset: Math17K; training budget: one epoch; rank-equivalent budget: \(r_0=8\). These are the same validation-selected checkpoints as \Cref{tab:llama8b_math17k_1epoch_accuracy_r8}. Full fine-tuning is an unbudgeted reference row separated by rules. Bold marks the lowest matched-budget adapter average PPL. The \textbf{Calib.} column gives the calibration source used to construct fixed \algname{Wanda}-style sparse supports; ``--'' denotes methods that do not use a calibration pass. Lower is better. Perplexities are computed on up to 120 examples per benchmark.}
\label{tab:llama8b_math17k_1epoch_ppl_r8}
\resizebox{\textwidth}{!}{%
\begin{tabular}{llcccccccc}
\toprule
\textbf{Method} & \textbf{Calib.} & \textbf{Selected LR} & \textbf{AddSub} & \textbf{MultiArith} & \textbf{SingleEq} & \textbf{GSM8K} & \textbf{AQuA} & \textbf{SVAMP} & \textbf{Average} \\
\midrule
\textsc{Base} (frozen) & -- & -- & 2.15 & 2.28 & 2.30 & 1.98 & 2.64 & 2.41 & 2.22 \\
\midrule
Full fine-tuning & -- & $5\cdot10^{-5}$ & 1.07 & 1.08 & 1.09 & 1.17 & 1.35 & 1.13 & 1.14 \\
\midrule
\algname{LoRA} & -- & $5\cdot10^{-4}$ & 1.07 & 1.09 & 1.10 & 1.17 & 1.33 & 1.13 & 1.14 \\
\algname{RoSA} & -- & $10^{-4}$ & 1.10 & 1.12 & 1.13 & 1.21 & 1.42 & 1.17 & 1.18 \\
\algname{SIFT} (TopK) & -- & $10^{-4}$ & 1.10 & 1.12 & 1.13 & 1.21 & 1.42 & 1.17 & 1.18 \\
\algname{SIFT} (RandK) & -- & $10^{-3}$ & 1.07 & 1.08 & 1.09 & 1.17 & 1.31 & 1.13 & 1.13 \\
\algname{Super} (RandK) & -- & $10^{-3}$ & 1.07 & 1.08 & 1.09 & 1.17 & 1.31 & 1.12 & \textbf{1.13} \\
\algname{Super} (TopK) & C4 & $5\cdot10^{-4}$ & 1.08 & 1.09 & 1.10 & 1.18 & 1.35 & 1.14 & 1.15 \\
\algname{Super} (BottomK) & C4 & $10^{-3}$ & 1.07 & 1.08 & 1.09 & 1.17 & 1.31 & 1.13 & 1.13 \\
\algname{Magnitude} (TopK) & -- & $10^{-3}$ & 1.07 & 1.08 & 1.09 & 1.17 & 1.32 & 1.13 & 1.13 \\
\algname{Magnitude} (BottomK) & -- & $10^{-3}$ & 1.06 & 1.08 & 1.09 & 1.16 & 1.30 & 1.12 & 1.13 \\
\algname{Supra} (TopK, $\lambda=0.3$) & C4 & $10^{-4}$ & 1.09 & 1.11 & 1.12 & 1.20 & 1.39 & 1.15 & 1.16 \\
\algname{Supra} (TopK, $\lambda=0.5$) & C4 & $10^{-4}$ & 1.09 & 1.11 & 1.12 & 1.20 & 1.40 & 1.16 & 1.17 \\
\algname{Supra} (TopK, $\lambda=0.8$) & C4 & $10^{-4}$ & 1.09 & 1.12 & 1.12 & 1.20 & 1.41 & 1.16 & 1.17 \\
\algname{Supra} (BottomK, $\lambda=0.3$) & C4 & $5\cdot10^{-4}$ & 1.07 & 1.08 & 1.10 & 1.17 & 1.32 & 1.13 & 1.14 \\
\algname{Supra} (BottomK, $\lambda=0.5$) & C4 & $5\cdot10^{-4}$ & 1.07 & 1.08 & 1.10 & 1.17 & 1.32 & 1.13 & 1.14 \\
\algname{Supra} (BottomK, $\lambda=0.8$) & C4 & $5\cdot10^{-4}$ & 1.07 & 1.08 & 1.10 & 1.17 & 1.32 & 1.13 & 1.14 \\
\algname{Supra-Mag} (BottomK, $\lambda=0.3$) & -- & $5\cdot10^{-4}$ & 1.07 & 1.08 & 1.10 & 1.17 & 1.32 & 1.13 & 1.14 \\
\algname{Supra-Mag} (BottomK, $\lambda=0.5$) & -- & $5\cdot10^{-4}$ & 1.07 & 1.08 & 1.10 & 1.17 & 1.31 & 1.12 & 1.13 \\
\algname{Supra-Mag} (BottomK, $\lambda=0.8$) & -- & $5\cdot10^{-4}$ & 1.07 & 1.08 & 1.10 & 1.17 & 1.32 & 1.13 & 1.14 \\
\bottomrule
\end{tabular}
}
\end{table}

\clearpage

\subsubsection{Math17K, three-epoch comparison}
\label{sec:llama8b_math17k_appendix}

\Cref{tab:llama8b_math17k_3epoch_accuracy_r8,tab:llama8b_math17k_3epoch_ppl_r8} report the \texttt{Meta-Llama-3-8B} Math17K comparison summarized in the main text. All rows in these tables use the three-epoch schedule. Full fine-tuning is included as an unbudgeted reference row. The corresponding one-epoch comparison is reported separately in \Cref{tab:llama8b_math17k_1epoch_accuracy_r8,tab:llama8b_math17k_1epoch_ppl_r8}, and the hybrid TopK--BottomK \algname{Super} variants are reported separately in \Cref{tab:llama8b_math17k_hybrid_super_accuracy_r8}.

\begin{table}[htbp]
\centering
\scriptsize
\caption{Arithmetic benchmark accuracy. Model: \texttt{Meta-Llama-3-8B}; fine-tuning dataset: Math17K; training budget: three epochs; rank-equivalent budget: \(r_0=8\). Adapter rows use approximately the rank-8 \algname{LoRA} trainable-parameter budget for this model. Full fine-tuning is an unbudgeted reference row separated by rules. Bold marks the best observed matched-budget adapter average. The \textbf{Calib.} column gives the calibration source used to construct fixed \algname{Wanda}-style sparse supports; ``--'' denotes methods that do not use a calibration pass. Each row uses the learning rate selected by the held-out validation split of the fine-tuning set. Results are exact-answer accuracy (\%).}
\label{tab:llama8b_math17k_3epoch_accuracy_r8}
\resizebox{\textwidth}{!}{%
\begin{tabular}{llcccccccc}
\toprule
\textbf{Method} & \textbf{Calib.} & \textbf{Selected LR} & \textbf{AddSub} & \textbf{MultiArith} & \textbf{SingleEq} & \textbf{GSM8K} & \textbf{AQuA} & \textbf{SVAMP} & \textbf{Average} \\
\midrule
\textsc{Base} (frozen) & -- & -- & 22.53 & 21.00 & 36.42 & 10.24 & 24.02 & 24.10 & 23.05 \\
\midrule
Full fine-tuning & -- & $5\cdot10^{-5}$ & 90.63 & 96.17 & 93.70 & 54.36 & 38.19 & 76.20 & 74.87 \\
\midrule
\algname{LoRA} & -- & $10^{-4}$ & 76.71 & 83.00 & 87.99 & 67.10 & 34.25 & 78.60 & 71.27 \\
\algname{RoSA} & -- & $5\cdot10^{-5}$ & 34.43 & 97.83 & 34.06 & 40.71 & 40.55 & 30.60 & 46.36 \\
\algname{SIFT} (TopK) & -- & $5\cdot10^{-5}$ & 45.82 & 95.50 & 82.87 & 68.99 & 42.52 & 78.80 & 69.08 \\
\algname{SIFT} (RandK) & -- & $10^{-4}$ & 26.08 & 93.33 & 66.34 & 61.18 & 44.09 & 49.30 & 56.72 \\
\algname{Super} (RandK) & -- & $10^{-4}$ & 0.25 & 97.33 & 48.43 & 59.97 & 37.80 & 31.00 & 45.80 \\
\algname{Super} (TopK) & C4 & $5\cdot10^{-5}$ & 4.81 & 88.17 & 13.19 & 9.48 & 35.04 & 5.90 & 26.10 \\
\algname{Super} (BottomK) & C4 & $10^{-4}$ & 72.15 & 95.67 & 88.58 & 66.34 & 42.91 & 70.70 & \textbf{72.73} \\
\algname{Magnitude} (TopK) & -- & $10^{-4}$ & 13.92 & 97.33 & 73.82 & 64.52 & 44.88 & 61.70 & 59.36 \\
\algname{Magnitude} (BottomK) & -- & $10^{-4}$ & 9.11 & 94.67 & 45.28 & 44.05 & 41.34 & 23.30 & 42.96 \\
\algname{Supra} (TopK, $\lambda=0.3$) & C4 & $5\cdot10^{-5}$ & 35.95 & 90.00 & 52.36 & 60.80 & 41.34 & 32.70 & 52.19 \\
\algname{Supra} (TopK, $\lambda=0.5$) & C4 & $5\cdot10^{-5}$ & 17.22 & 96.67 & 51.18 & 66.79 & 38.98 & 40.70 & 51.92 \\
\algname{Supra} (TopK, $\lambda=0.8$) & C4 & $5\cdot10^{-5}$ & 0.00 & 97.67 & 0.00 & 2.35 & 32.68 & 0.00 & 22.12 \\
\algname{Supra} (BottomK, $\lambda=0.3$) & C4 & $10^{-4}$ & 35.70 & 49.83 & 38.98 & 50.72 & 40.16 & 14.30 & 38.28 \\
\algname{Supra} (BottomK, $\lambda=0.5$) & C4 & $10^{-3}$ & 85.32 & 85.00 & 75.79 & 24.03 & 33.46 & 69.60 & 62.20 \\
\algname{Supra} (BottomK, $\lambda=0.8$) & C4 & $10^{-4}$ & 60.76 & 94.17 & 66.34 & 67.40 & 38.98 & 54.50 & 63.69 \\
\algname{Supra-Mag} (BottomK, $\lambda=0.3$) & -- & $5\cdot10^{-5}$ & 6.08 & 72.17 & 21.06 & 49.58 & 37.01 & 7.90 & 32.30 \\
\algname{Supra-Mag} (BottomK, $\lambda=0.5$) & -- & $10^{-4}$ & 60.51 & 50.67 & 54.53 & 48.82 & 36.22 & 20.60 & 45.22 \\
\algname{Supra-Mag} (BottomK, $\lambda=0.8$) & -- & $10^{-4}$ & 64.30 & 97.50 & 68.11 & 67.32 & 36.22 & 62.10 & 65.93 \\
\bottomrule
\end{tabular}
}
\end{table}

\begin{table}[htbp]
\centering
\scriptsize
\caption{Test perplexity. Model: \texttt{Meta-Llama-3-8B}; fine-tuning dataset: Math17K; training budget: three epochs; rank-equivalent budget: \(r_0=8\). These are the same validation-selected checkpoints as \Cref{tab:llama8b_math17k_3epoch_accuracy_r8}. Full fine-tuning is an unbudgeted reference row separated by rules. Bold marks the lowest matched-budget adapter average PPL. The \textbf{Calib.} column gives the calibration source used to construct fixed \algname{Wanda}-style sparse supports; ``--'' denotes methods that do not use a calibration pass. Lower is better. Perplexities are computed on up to 120 examples per benchmark.}
\label{tab:llama8b_math17k_3epoch_ppl_r8}
\resizebox{\textwidth}{!}{%
\begin{tabular}{llcccccccc}
\toprule
\textbf{Method} & \textbf{Calib.} & \textbf{Selected LR} & \textbf{AddSub} & \textbf{MultiArith} & \textbf{SingleEq} & \textbf{GSM8K} & \textbf{AQuA} & \textbf{SVAMP} & \textbf{Average} \\
\midrule
\textsc{Base} (frozen) & -- & -- & 2.15 & 2.28 & 2.30 & 1.98 & 2.64 & 2.41 & 2.22 \\
\midrule
Full fine-tuning & -- & $5\cdot10^{-5}$ & 1.03 & 1.04 & 1.04 & 1.09 & 1.17 & 1.06 & 1.07 \\
\midrule
\algname{LoRA} & -- & $10^{-4}$ & 1.06 & 1.07 & 1.08 & 1.15 & 1.34 & 1.11 & 1.12 \\
\algname{RoSA} & -- & $5\cdot10^{-5}$ & 1.08 & 1.10 & 1.11 & 1.19 & 1.37 & 1.15 & 1.16 \\
\algname{SIFT} (TopK) & -- & $5\cdot10^{-5}$ & 1.08 & 1.10 & 1.10 & 1.19 & 1.34 & 1.14 & 1.15 \\
\algname{SIFT} (RandK) & -- & $10^{-4}$ & 1.10 & 1.13 & 1.15 & 1.23 & 1.44 & 1.19 & 1.19 \\
\algname{Super} (RandK) & -- & $10^{-4}$ & 1.29 & 1.16 & 1.35 & 1.35 & 1.43 & 1.40 & 1.33 \\
\algname{Super} (TopK) & C4 & $5\cdot10^{-5}$ & 1.09 & 1.12 & 1.18 & 1.23 & 1.36 & 1.21 & 1.19 \\
\algname{Super} (BottomK) & C4 & $10^{-4}$ & 1.20 & 1.20 & 1.29 & 1.33 & 1.48 & 1.36 & 1.31 \\
\algname{Magnitude} (TopK) & -- & $10^{-4}$ & 1.16 & 1.13 & 1.20 & 1.26 & 1.52 & 1.24 & 1.23 \\
\algname{Magnitude} (BottomK) & -- & $10^{-4}$ & 1.09 & 1.11 & 1.12 & 1.20 & 1.36 & 1.16 & 1.17 \\
\algname{Supra} (TopK, $\lambda=0.3$) & C4 & $5\cdot10^{-5}$ & 1.07 & 1.08 & 1.09 & 1.17 & 1.32 & 1.12 & 1.13 \\
\algname{Supra} (TopK, $\lambda=0.5$) & C4 & $5\cdot10^{-5}$ & 1.07 & 1.08 & 1.09 & 1.17 & 1.33 & 1.12 & 1.13 \\
\algname{Supra} (TopK, $\lambda=0.8$) & C4 & $5\cdot10^{-5}$ & 1.13 & 1.14 & 1.12 & 1.18 & 1.34 & 1.17 & 1.17 \\
\algname{Supra} (BottomK, $\lambda=0.3$) & C4 & $10^{-4}$ & 1.06 & 1.06 & 1.08 & 1.15 & 1.27 & 1.11 & 1.12 \\
\algname{Supra} (BottomK, $\lambda=0.5$) & C4 & $10^{-3}$ & 1.04 & 1.05 & 1.05 & 1.11 & 1.19 & 1.08 & \textbf{1.08} \\
\algname{Supra} (BottomK, $\lambda=0.8$) & C4 & $10^{-4}$ & 1.06 & 1.06 & 1.08 & 1.15 & 1.29 & 1.11 & 1.12 \\
\algname{Supra-Mag} (BottomK, $\lambda=0.3$) & -- & $5\cdot10^{-5}$ & 1.07 & 1.08 & 1.10 & 1.18 & 1.35 & 1.13 & 1.14 \\
\algname{Supra-Mag} (BottomK, $\lambda=0.5$) & -- & $10^{-4}$ & 1.06 & 1.06 & 1.08 & 1.15 & 1.28 & 1.11 & 1.12 \\
\algname{Supra-Mag} (BottomK, $\lambda=0.8$) & -- & $10^{-4}$ & 1.06 & 1.06 & 1.08 & 1.15 & 1.27 & 1.11 & 1.11 \\
\bottomrule
\end{tabular}
}
\end{table}

\end{document}